\newtheorem{theorem}{Theorem}[section]
\newtheorem{statement}[theorem]{Statement}
\newtheorem{lemma}[theorem]{Lemma}
\newtheorem{remark}[theorem]{Remark}
\newtheorem{claim}[theorem]{Claim}
\newtheorem*{subproblem*}{Subproblem}
\newtheorem{definition}[theorem]{Definition}
\newcommand{\mc}{\mathcal}
 \newcommand{\mbb}{\mathbb} \newcommand{\wt}{\widetilde}
\newcommand{\fa}{\forall}  \newcommand{\tb}{\textbf}
\newcommand{\tx}{\text}
\newcommand{\dr}{\text{d}}
\newcommand{\ot}{\otimes}
\newcommand{\R}{\mbb R}
\newcommand{\C}{\mbb C}
\newcommand{\Z}{\mbb Z}
\newcommand{\wh}{\widehat}
\newcommand{\diag}{\tx{diag}}
\newcommand{\poly}{\tx{poly}}
\newcommand{\cond}{\tx{cond}}
\newcommand{\lt}{\left}
\newcommand{\rt}{\right}
\newcommand{\sk}[1]{}
\newcommand{\qt}[1]{}
\newcommand{\qq}[1]{{\color{black}{#1}}}
\newcommand{\sk}[1]{\noindent{\textcolor{magenta}{{\bf sk:} \em #1}}}
\newcommand{\qt}[1]{{\color{cyan}{[q: #1]}}}
\newcommand{\qq}[1]{{\color{red}{ #1}}}
\author{Qingqing Huang
  % \thanks{ Use footnote for providing further information about author (webpage,
  %   alternative address)---\emph{not} for acknowledging
  %   funding agencies.}
  \\  MIT,
  \\  EECS,
  \\  LIDS,
  \\  \texttt{qqh@mit.edu}
  \and  Sham M. Kakade
  \\ University of Washington,
  \\ Department of Statistics,
  \\Computer Science \& Engineering,
  \\  \texttt{sham@cs.washington.edu}
}
\date{}
\begin{document}
\title{Super-Resolution Off the Grid }

\maketitle
\qt{change to complex amplitude, done}

\begin{abstract}
  Super-resolution is the problem of recovering a superposition of point sources using bandlimited
  measurements, which may be corrupted with noise. This signal processing problem arises in numerous
  imaging problems, ranging from astronomy to biology to spectroscopy, where it is common to take
  (coarse) Fourier measurements of an object.
  % This problem has been widely
  % studied in the statistics and signal processing communities.
  Of particular interest is in obtaining estimation procedures which are robust to noise, with the
  following desirable statistical and computational properties: we seek to use coarse Fourier
  measurements (bounded by some \emph{cutoff frequency}); we hope to take a (quantifiably) small
  number of measurements; we desire our algorithm to run quickly.

  Suppose we have $k$ point sources in $d$ dimensions, where the points are separated by at least
  $\Delta$ from each other (in Euclidean distance). This work provides an algorithm with the
  following favorable guarantees:
\begin{itemize}
\item The algorithm uses Fourier measurements, whose frequencies are bounded by $O(1/\Delta)$ (up to
  log factors). Previous algorithms require a \emph{cutoff frequency} which may be as large as
  $\Omega(\sqrt{d}/\Delta)$.
\item The number of measurements taken by and the computational complexity of our algorithm are
  bounded by a polynomial in both the number of points $k$ and the dimension $d$, with \emph{no}
  dependence on the separation $\Delta$. In contrast, previous algorithms depended inverse
  polynomially on the minimal separation and exponentially on the dimension for both of these
  quantities.
\end{itemize}
Our estimation procedure itself is simple: we take random bandlimited measurements (as opposed to
taking an exponential number of measurements on the hyper-grid). Furthermore, our analysis and
algorithm are elementary (based on concentration bounds for sampling
and the singular value
decomposition).
\end{abstract}
% \begin{keywords}
% \end{keywords}

\setcounter{page}{1}

\newpage \section{Introduction}

We follow the standard mathematical abstraction of this problem (Candes \& Fernandez-Granda
\cite{candes2014towards,candes2013super}): consider a $d$-dimensional signal $x(t)$ modeled as a
weighted sum of $k$ Dirac measures in $\R^d$:
\begin{align}
  \label{eq:sig-x}
  x(t) = \sum_{j=1}^{k} w_j \delta_{\mu^{(j)}},
\end{align}
where the point sources, the $\mu^{(j)}$'s, are in $\R^d$.
%where the point sources $\mu^{(j)}$'s are supported on a compact set $\mc T\subset\R^d$.
%
Assume that the \qq{weights $w_j$ are complex valued, whose absolute values} are lower and upper bounded
by some positive constant.
Assume that we are given $k$, the number of point sources\footnote{An upper bound of the number of
  point sources suffices.}.

Define the measurement function $f(s):\R^d\to \C$ to be the convolution of the point source $x(t)$
with a low-pass point spread function $e^{i\pi <s, t> }$ as below:
\begin{align}
  \label{eq:measure-f}
  f(s) = \int_{t\in \R^d} e^{i\pi < t,s>} x(\dr t) = \sum_{j=1}^{k} w_j e^{i\pi<\mu^{(j)},s>}.
\end{align}
% One useful fact about this measurement function is that for any two points $s_1,s_2\in\R^d$ we
% have:
% \begin{align*}
%   f(s_1+ s_2) = f(s_1)f(s_2).
% \end{align*}
In the noisy setting, the measurements are corrupted by uniformly bounded perturbation $z$:
\begin{align}
  \label{eq:measure-noise}
  \wt f(s) = f(s) + z(s), \quad |z(s)|\le \epsilon_z, \fa s.
\end{align}

Suppose that we are only allowed to measure the signal $x(t)$ by evaluating the measurement function
$\wt f(s)$ at any $s\in\R^d$, and we want to recover the parameters of the point source signal,
i.e., $\{w_j, \mu^{(j)}: j\in[k]\}$.
We follow the standard normalization to assume that:
\[
\mu^{(j)}\in  [-1,+1]^d,\quad \qq{ |w_j|} \in [0,1] \quad \fa j\in[k].
\]
Let $w_{min}=\min_{j}|w_j|$ denote the minimal weight, and let $\Delta$ be the minimal separation of
the point sources defined as follows:
\begin{align}
  \label{eq:def-delta}
  \Delta = \min_{j\neq j'}\|\mu^{(j)}-\mu^{(j')}\|_2,
\end{align}
where we use the Euclidean distance between the point sources for ease of exposition\footnote{Our
  claims hold withut using the ``wrap around metric'', as
  in~\cite{candes2014towards,candes2013super}, due to our random sampling.  Also, it is possible to
  extend these results for the $\ell_p$-norm case.}.  These quantities are key parameters in our
algorithm and analysis.  Intuitively, the recovery problem is harder if the minimal separation
$\Delta$ is small and the minimal weight $w_{min}$ is small.

The first question is that, given exact measurements, namely $\epsilon_z=0$, where and how many
measurements should we take so that the original signal $x(t)$ can be exactly recovered.
\begin{definition}[Exact recovery]
  In the exact case, i.e. $\epsilon_z=0$, we say that an algorithm achieves exact recovery with $m$
  measurements of the signal $x(t)$ if, upon input of these $m$ measurements, the algorithm returns
  the exact set of parameters $\{w_j, \mu^{(j)}: j\in[k]\}$.
\end{definition}

Moreover, we want the algorithm to be measurement noise tolerant, in the sense that in the presence
of measurement noise we can still recover good estimates of the point sources.
\begin{definition}[Stable recovery]
  In the noisy case, i.e., $\epsilon_z \ge 0$, we say that an algorithm achieves stable recovery
  with $m$ measurements of the signal $x(t)$ if, upon input of these $m$ measurements, the algorithm
  returns estimates $\{\wh w_j, \wh \mu^{(j)}: j\in[k]\}$ such that
  \begin{align*}
    \min_{\pi} \max\lt\{ \|\wh \mu^{(j)} - \mu^{(\pi(j))} \|_2: j\in[k]\rt\} \le \poly(d,k)
    \epsilon_z,
  \end{align*}
  where the $\min$ is over permutations $\pi$ on $[k]$ and \poly(d,k) is a polynomial function in
  $d$ and $k$.
\end{definition}
By definition, if an algorithm achieves stable recovery with $m$ measurements, it also
achieves exact recovery with these $m$ measurements.
\sk{this isn't quite true due to the
  quantification of $m$, as it may be a function of
  $\epsilon_z$. this might have been my fault in definition changes.}\qt{done}

The terminology of ``super-resolution'' is appropriate due to the following remarkable result (in
the noiseless case) of Donoho \cite{donoho1992superresolution}: suppose we want to accurately
recover the point sources to an error of $\gamma$, where $\gamma \ll \Delta$. Naively, we may expect
to require measurements whose frequency depends inversely on the desired the accuracy $\gamma$.
Donoho \cite{donoho1992superresolution} showed that it suffices to obtain a finite number of
measurements, whose frequencies are bounded by $O(1/\Delta)$, in order to achieve \emph{exact}
recovery; thus resolving the point sources far more accurately than that which is naively implied by
using frequencies of $O(1/\Delta)$. Furthermore, the work of Candes \& Fernandez-Granda
\cite{candes2014towards,candes2013super} showed that stable recovery, in the univariate case
($d=1$), is achievable with a cutoff frequency of $O(1/\Delta)$ using a convex program and a number
of measurements whose size is polynomial in the relevant quantities.

\subsection{This work}

We are interested in stable recovery procedures with the following desirable statistical and
computational properties: we seek to use coarse (low frequency) measurements; we hope to take a
(quantifiably) small number of measurements; we desire our algorithm run quickly. Informally, our
main result is as follows:

\begin{theorem}[Informal statement of Theorem~\ref{thm:main-thm}]
  For a fixed probability of error, the proposed algorithm achieves stable recovery with a number of
  measurements and with computational runtime that are both on the order of \qq{$O(
  (k\log(k)+d)^2)$}. Furthermore, the algorithm makes measurements which are bounded in frequency by
  $O(1/\Delta)$ (ignoring log factors).
\end{theorem}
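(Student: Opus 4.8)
The plan is to reduce stable recovery to a single well-conditioned generalized-eigenvalue problem, so that all $d$ coordinates of all $k$ sources fall out of one simultaneous diagonalization. Fix small steps $\tau,\eta=\tilde O(\Delta)$ and a random direction $u\in\R^d$. I would sample the (noisy) measurement function on a structured scheme: the \emph{rows} are indexed by the reference point $0$, the $d$ coordinate probes $\tau e_1,\dots,\tau e_d$, and $n_1=\tilde O(k\log k)$ random anchors; the \emph{columns} by $n_2=\tilde O(k\log k)$ random anchors. This produces a matrix $M$ with $M_{pq}=\wt f(\rho_p+r_q)$ together with a second copy $M'$ whose rows are all shifted by $\eta u$. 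Writing $A_{pj}=e^{i\pi\langle\mu^{(j)},\rho_p\rangle}$, $B_{qj}=e^{i\pi\langle\mu^{(j)},r_q\rangle}$, and $q_j=e^{i\pi\eta\langle\mu^{(j)},u\rangle}$, the noiseless parts factor as $M=A\,\diag(w)\,B^{\top}$ and $M'=A\,\diag(q)\,\diag(w)\,B^{\top}$, sharing the factors $A$ and $B$.

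The crux — and the step I expect to be the main obstacle — is showing that $A$ and $B$ are well conditioned (smallest singular value bounded below) while using only anchors whose \emph{coordinates} are bounded by $\tilde O(1/\Delta)$. I would draw each coordinate of an anchor i.i.d. from a fixed law of spread $\tilde O(1/\Delta)$ and argue by matrix concentration (matrix Bernstein) that $\tfrac1{n_1}A^{*}A$ concentrates, with $n_1=\tilde O(k\log k)$ anchors, around the population Gram matrix $G$ with $G_{jj'}=\prod_{\ell}\psi\!\big(\pi(\mu^{(j)}_\ell-\mu^{(j')}_\ell)\big)$, where $\psi$ is the per-coordinate characteristic function. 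Since $\psi(t)\approx\exp(-\Theta(t^2/\Delta^2))$ near the origin, the \emph{product} decays like $\exp(-\Theta(\|\mu^{(j)}-\mu^{(j')}\|_2^2/\Delta^2))\le\exp(-\Theta(1))$ whenever $\|\mu^{(j)}-\mu^{(j')}\|_2\ge\Delta$, and an extra $\log k$ factor in the spread pushes every off-diagonal entry below $1/\poly(k)$. Thus $G$ is diagonally dominant and well conditioned. The key point making the per-axis cutoff dimension-free is exactly that this product is governed by $\|\mu^{(j)}-\mu^{(j')}\|_2\ge\Delta$ rather than by the small individual coordinate gaps; an isotropic or grid-based scheme, by contrast, must resolve coordinate gaps as small as $\Delta/\sqrt d$ and hence pays an $\Omega(\sqrt d/\Delta)$ cutoff.

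Given well-conditioned factors, I would recover everything from $M$ and $M'$ without ever solving an exponential matching problem. Take the rank-$k$ SVD $M=U\Sigma V^{*}$ to get the signal subspace, then form $\Psi=U^{*}M'V\Sigma^{-1}=R\,\diag(q)\,R^{-1}$ with $R=U^{*}A$; for generic $u$ the eigenvalues $q_j$ are distinct, so diagonalizing the single matrix $\Psi$ recovers the eigenvector matrix $R$ (columns up to scale and a common permutation). Setting $\hat A=UR$ then recovers $A$ up to per-column scalars, which I fix using the reference row (where $A_{0,j}=1$). Finally each coordinate is read off directly as $\mu^{(j)}_\ell=\tfrac{1}{\pi\tau}\arg\!\big(\hat A_{\tau e_\ell,\,j}/\hat A_{0,\,j}\big)$ (no phase wrapping, since $\tau$ is tiny), and the weights $w_j$ follow from a linear solve. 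The labels $j$ are automatically consistent across all coordinates because every read-off uses the \emph{one} eigenvector basis obtained from $\Psi$; this is precisely what lets the $d$ coordinates be probed by $d$ additional rows rather than by $d$ separate matrix pencils, keeping the total at $O((k\log k+d)^2)$.

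For stability I would invoke standard perturbation theory — Weyl for the singular values, Davis--Kahan $\sin\Theta$ for the subspace, and eigenvalue/eigenvector perturbation for the diagonalizable $\Psi$ — with all error amplification controlled by the condition numbers established in the second paragraph, yielding the $\poly(d,k)\,\epsilon_z$ guarantee. Counting: the two matrices $M,M'$ each have size $\tilde O(k\log k+d)$ per side, giving $O((k\log k+d)^2)$ measurements with per-axis frequency $\tilde O(1/\Delta)$, and the runtime is dominated by the SVD and the $k\times k$ eigendecomposition, matching the claimed order.
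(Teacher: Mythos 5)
Your proposal is essentially the paper's own construction: the reference row plus $d$ coordinate probes plus $\tilde O(k\log k)$ random anchors, together with the shifted copy $M'$, are exactly the slices of the paper's third-order tensor $F=V_{S'}\otimes V_{S'}\otimes(V_2D_w)$; your diagonalization of $\Psi=U^{*}M'V\Sigma^{-1}$ is Jennrich's algorithm (Algorithm~\ref{alg:tensordecomp}); and your Gram-matrix concentration argument, in which the product of per-coordinate characteristic functions decays with $\|\mu^{(j)}-\mu^{(j')}\|_2$ and so yields a dimension-free per-axis cutoff, is precisely the content of Lemmas~\ref{lem:p2-ingham} and~\ref{lem:main-tech}. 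The one point to tighten is that your ``fixed law of spread $\tilde O(1/\Delta)$'' must have a characteristic function bounded by $e^{-ct^2/\Delta^2}$ \emph{globally}, not just near the origin, for the claimed product decay to hold for all coordinate gaps --- the paper simply takes the anchors Gaussian, for which $\psi(t)=e^{-R^2t^2/2}$ exactly.
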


Notably, our algorithm and analysis directly deal with the multivariate case, with the univariate
case as a special case. Importantly, the number of measurements and the computational runtime do
\emph{not} depend on the minimal separation of the point sources. This may be important even in
certain low dimensional imaging applications where taking physical measurements are costly (indeed,
super-resolution is important in settings where $\Delta$ is small).  Furthermore, our technical
contribution of how to decompose a certain tensor constructed with Fourier measurements may be of
broader interest to related questions in statistics, signal processing, and machine learning.

\begin{table}
%  \centering
  \renewcommand{\arraystretch}{1.7}
  \hspace*{-0.25in}
  \begin{tabular}[]{|c|c|c|c|c|c|c|}
    \hline
    &   \multicolumn{3}{c|}{ $d=1$}  & \multicolumn{3}{c|}{$d\ge 1$}
    \\
    \hline
    &  cutoff freq & measurements & runtime  &  cutoff freq & measurements & runtime
    \\
    \hline
    SDP & $1\over \Delta$ & $k\log(k)\log({1\over \Delta})$ & $poly({1\over
      \Delta},k)$
    & ${C_d\over \Delta_{\infty}}$ & $({1\over \Delta_{\infty}})^d$ &  $poly(({1\over \Delta_{\infty}})^d,k)$
    \\
    \hline
    MP & $1\over \Delta$ &  $1\over \Delta$ &  $({1\over \Delta})^3$
    & - & - & -
    \\
    \hline
    \tb{Ours} & $1\over \Delta$ & \qq{$(k\log(k) )^2$} & \qq{$(k\log(k) )^2$}
    & ${\log(kd)}\over \Delta$ & \qq{$(k\log(k)+d)^2$} & \qq{$(k\log(k)+d)^2$}
    \\
    \hline
  \end{tabular}
  \caption{
    \sk{fix the table spacing as $\Delta$ is cutoff}\qt{done?}
    See Section~\ref{sec:comp-relat-work} for description.
    See Lemma~\ref{rem:cutoff} for details about the cutoff frequency. Here, we are implicitly using $O(\cdot)$ notation.
  }
  \label{tab:a}
\end{table}

\subsection{Comparison to related work}
\label{sec:comp-relat-work}
Table~\ref{tab:a} summarizes the comparisons between our algorithm and the existing results. The
multi-dimensional cutoff frequency we refer to in the table is the maximal coordinate-wise entry of
any measurement frequency $s$ (i.e. $\| s\|_\infty$). ``SDP'' refers to the semidefinite
programming (SDP) based algorithms of Candes \& Fernandez-Granda
\cite{candes2013super,candes2014towards}; in the univariate case, the number of measurements can be
reduced by the method in Tang et. al.  \cite{tang2013compressed} (this is reflected in the table).
``MP'' refers to the matrix pencil type of methods, studied in \cite{liao2014music} and
\cite{moitra2014threshold} for the univariate case.  Here, we are defining the infinity norm
separation as $\Delta_{\infty} = \min_{j\neq j'}\|\mu^{(j)}-\mu^{(j')}\|_\infty$, which is
understood as the wrap around distance on the unit circle.  $C_d\geq 1$ is a problem dependent
constant (discussed below).

Observe the following differences between our algorithm and prior work:
\begin{enumerate}
  [labelindent=*,leftmargin=*,label= \arabic*)]
\item Our minimal separation is measured under the $\ell_2$-norm instead of the infinity norm, as in
  the SDP based algorithm. Note that $\Delta_{\infty}$ depends on the coordinate system; in the
  worst case, it can underestimate the separation by a $1/ \sqrt{d}$ factor, namely
  $\Delta_{\infty}\sim \Delta/\sqrt{d}$.

\item The computation complexity and number of measurements are polynomial in dimension $d$ and the
  number of point sources $k$, and surprisingly do not depend on the minimal separation of the point
  sources!  Intuitively, when the minimal separation between the point sources is small, the problem
  should be harder, this is only reflected in the sampling range and the cutoff frequency of the
  measurements in our algorithm.

\item Furthermore, one could project the multivariate signal to the coordinates and solve multiple
  univariate problems (such as in \cite{potts2013parameter, nandi2013noise}, which provided only
  exact recovery results). Naive random projections would lead to a cutoff frequency of
  $O(\sqrt{d}/\Delta)$.
\end{enumerate}

{\bf SDP approaches:} The work in \cite{candes2013super, candes2014towards, fernandez2014convex}
formulates the recovery problem as a total-variation minimization problem; they then show the dual
problem can be formulated as an SDP. They focused on the analysis of $d=1$ and only explicitly
extend the proofs for $d=2$. For $d\geq 1$, Ingham-type theorems (see
\cite{russell1978controllability, komornik2005fourier}) suggest that $C_d = O(\sqrt{d})$.

The number of measurements can be reduced by the method in \cite{tang2013compressed} for the $d=1$
case, which is noted in the table. Their method uses sampling ``off the grid''; technically, their
sampling scheme is actually sampling random points from the grid, though with far fewer
measurements.

{\bf Matrix pencil approaches:} The matrix pencil method, MUSIC and Prony's method are essentially
the same underlying idea, executed in different ways.  The original Prony's method directly attempts
to find roots of a high degree polynomial, where the root stability has few guarantees. Other
methods aim to robustify the algorithm.

\sk{We should check that the MP algorithm actually gets super
  resolution/exact recovery. The
  result of Moitra, due to the dependence on the noise on the, means
  no exact recovery. This is bad.}\qt{todo}

Recently, for the univariate matrix pencil method, Liao \& Fannjiang \cite{liao2014music} and Moitra
\cite{moitra2014threshold} provide a stability analysis of the MUSIC algorithm. Moitra
\cite{moitra2014threshold} studied the optimal relationship between the cutoff frequency and
$\Delta$, showing that if the cutoff frequency is less than ${1/ \Delta}$, then stable recovery is
not possible with matrix pencil method (with high probability).

\subsection{Notation}

Let $\R$, $\C$, and $\Z$ to denote real, complex, and natural numbers.
For $d\in\Z$, $[d]$ denotes the set $[d] =\{1,\dots, d\}$.
For a set $\mc S$, $|\mc S|$  denotes its cardinality.  We use $\oplus$ to denote the direct
sum of sets, namely $\mc S_1\oplus \mc S_2 = \{(a+b): a\in\mc S_1, b\in\mc S_2\}$.

%For $a\le b$ and $a,b\in\R$, we use $[a,b]$ to denote the interval.
%
Let $e_n$ to denote the $n$-th standard basis vector in $\R^d$, for $n\in[d]$.
Let $\mc P_{R,2}^d = \{x\in\R^{d}: \|x\|_2= 1\}$ to denote the $d$-sphere of radius $R$ in the
$d$-dimensional standard Euclidean space.
%
% We use $\mc B_{R,p}^d = \{x\in\R^{d}: \|x\|_p\le R\}$ denote the ball of radius $R$ in the
% $d$-dimensional space with $p$-norm.
% We use $\tx{vol}(\mc B^d_{R,p})$ to denote its volume.
% defined in terms of gamma functions:
% \begin{align*}
%   \tx{vol}(\mc B^d_{R,p}) = {(2\Gamma(1/p +1) R)^d \over \Gamma(d/p + 1) }.
% \end{align*}
% In particular, $\mc B_{1,\infty}^d = [-1,1]^d$ is the hypercube and $\tx{vol}(\mc B^d_{R,\infty})
% = (2R)^d$.  $\mc B_{1,2}^d$ is the unit ball in the standard Euclidean space. We know
% that $$\tx{vol}(\mc B^d_{R,2}) \sim {1\over \sqrt{d\pi} } \lt( {R \over \sqrt{d/ (2\pi
% e)}}\rt)^{d}.$$ A useful fact is that for $R <\sqrt {d/ (2\pi e)}$, the volume shrinks with the
% dimension $d$.

Denote the condition number of a matrix $X\in\R^{m\times n}$ as
$\cond_2(X) = \sigma_{max}(X) /  \sigma_{min}(X)$, where $\sigma_{max}(X)$ and $\sigma_{min}(X)$
are the maximal and minimal singular values of $X$.

\sk{should define the projection notation a little more carefully.}\qt{done}

%\begin{align*}
%  \cond_2(X) = {\sigma_{max}(X)\over \sigma_{min}(X)} = {\max_{v\in\mc B_{1,2}^n}\|X v\|_2 \over
%    \min_{v\in\mc B_{1,2}^n}\|X v\|_2 }.
%\end{align*}

We use $\ot$ to denote tensor product. Given matrices $A, B,C\in\C^{m\times k}$, the tensor product
$V= A\ot B\ot C\in\C^{m\times m\times m}$ is equivalent to $V_{i_1,i_2,i_3} = \sum_{n=1}^{k}A_{i_1,n}B_{i_2,n}
C_{i_3,n}$.
Another view of tensor is that it defines a multi-linear mapping. For given dimension $m_A, m_B,m_C$
the mapping $V(\cdot,\cdot,\cdot):\C^{m\times m_A} \times \C^{m\times m_B} \times \C^{m\times m_C}
\to \C^{m_A\times m_B\times m_C}$ is defined as:
$$[V(X_A, X_B, X_c)]_{i_1,i_2,i_3} = \sum_{j_1,j_2,j_3\in[m]}V_{j_1,j_2,j_3} [X_{A}]_{j_1, i_1} [X_{B}]_{j_2, i_2} [X_{C}]_{j_3, i_3}.$$
In particular, for $a\in\C^{m}$, we use $V(I,I,a)$ to denote the projection of tensor $V$ along the
3rd dimension.
Note that if the tensor admits a decomposition $V = A\ot B\ot C$, it is straightforward to verify
that $$V(I,I,a) = A Diag(C^\top a) B^\top.$$
It is well-known that if the factors $A, B, C$ have full column rank then the rank $k$ decomposition
is unique up to re-scaling and common column permutation. Moreover, if the condition number of the
factors are upper bounded by a positive constant, then one can compute the unique tensor
decomposition $V$ with stability guarantees (See \cite{anandkumar2014tensor} for a
review. Lemma~\ref{lem:stable-jennrich} herein provides an explicit statement.).

\section{Warm-up}

\subsection{1-D case: revisiting the matrix pencil method}

Let us first review the matrix pencil method for the univariate case, which stability was recently
rigorously analyzed in Liao \& Fannjiang \cite{liao2014music} and Moitra \cite{moitra2014threshold}.

\sk{should say that matrices with this structure and Hankel and Vandermonde}\qt{done}

A square matrix $H$ is called a \emph{Hankel} matrix if its skew-diagonals are constants, namely
$H_{i,j} = H_{i-1,j+1}$.
For some positive constants $m\in\Z$, sample to get the measurements $f(s)$ evaluated at the
sampling set $\mc S_3 = \{0, 1, \dots, 2m\}$, and construct two {Hankel matrices} $H_0,
H_1\in\C^{m\times m}$:
  \begin{align}
    \label{eq:H0H1-def}
H_0 = \left[
  \begin{array}[c]{cccc}
    f(0) & f(1) & \dots & f(m-1)
    \\
    f(1) & f(2) &\dots & f(m)
    \\
    \vdots & & & \vdots
    \\
    f(m-1) &f(m) &\dots & f(2m-1)
  \end{array}
\right],
\quad
H_1 = \left[
  \begin{array}[c]{cccc}
    f(1) & f(2) & \dots & f(m)
    \\
    f(2) & f(3) &\dots & f(m+1)
    \\
    \vdots & & & \vdots
    \\
    f(m) &f(m+1) &\dots & f(2m)
  \end{array}
\right].
\end{align}

Define \qq{$D_w\in\C_{diag}^{k\times k}$} to be the diagonal matrix with the weights on the main
diagonal: $[D_w]_{j,j} = w_j$. Define $D_\mu\in\C_{diag}^{k\times k}$ to be $[D_\mu]_{j,j} =
e^{i\pi\mu^{(j)}}$.

A matrix $V$ is called a \emph{Vandermonde matrix} if each column is a geometric progression.
defined the Vandermonde matrix $V_m \in\C^{m\times k}$ as below:
\begin{align}
  \label{eq:def-vand}
  V_m = \left[
    \begin{array}[c]{ccc}
      1 & \dots & 1
      \\
      (e^{i\pi\mu^{(1)}})^1 &  \dots &  (e^{i\pi\mu^{(k)}})^1
      \\
      \vdots & & \vdots
      \\
      (e^{i\pi\mu^{(1)}})^{m-1} &  \dots &  (e^{i\pi\mu^{(k)}})^{m-1}
  \end{array}
\right].
\end{align}

The two Hankel matrices $H_0$ and $H_1$ admit the following simultaneous diagonalization:
\begin{align}
  \label{eq:H01-decomp}
  H_0 = V_m D_w V_m^\top, \quad H_1 = V_m D_w D_\mu V_m^\top.
\end{align}
As long as $V_m$ is of full rank, this simultaneous diagonalization can be computed by solving the
generalized eigenvalue problem, and the parameters of the point source can thus be obtained from the
factor $V_m$ and $D_w$.

The univariate matrix pencil method only needs $m\ge k$ to achieve exact recovery.  In the noisy case,
the stability of generalized eigenvalue problem depends on the condition number of the Vandermonde
matrix $V_m$ and the minimal weight $w_{min}$.
%maximal weight ratio $\Delta_w$.
%: $$\cond_2(H_0) = \Delta_w \cond_2(V_m)^2.$$

Since all the nodes ($e^{i\pi\mu^{(j)}}$'s) of this Vandermonde matrix lie on the unit circle in the
complex plane, it is straightforward to see that asymptotically $\lim_{m\to\infty}\cond_2(V_m) =
1$. Furthermore, for $m > {1/ \Delta}$, \cite{liao2014music,moitra2014threshold} showed that
$\cond_2(V_m)$ is upper bounded by a constant that does not depend on $k$ and $m$.  This bound on
condition number is also implicitly discussed in \cite{potts2013parameter}.

\iffalse
% Vandermonde matrices with real nodes are notorious for being ill-conditioned
% \cite{tyrtyshnikov1994bad}, while when the nodes are complex it is unclear how good the condition
% number is \cite{gautschi1990stable,tucci2011eigenvalue}.
%
Since all the nodes ($e^{i\pi\mu^{(j)}}$'s) of this Vandermonde matrix lie on the unit circle in the
complex plane, we know that asymptotically $\lim_{m\to\infty}\cond_2(V_m) = 1.$
%
Recently, for the univariate matrix pencil method, (Liao \& Fannjiang \cite{liao2014music}) analyzed
the stability of MUSIC algorithm. (Moitra \cite{moitra2014threshold}) studied the optimal
relationship between $m$, $\Delta$, and $\cond_2(V_m)$. It is shown that if $m > {1/
  \Delta}$, then $\cond_2(V_m)$ is upper bounded by a constant that does not depend on $k$ and
$m$.  This bound on condition number is also implicitly discussed in \cite{potts2013parameter}.
\fi

% \qt{One may think of projecting the signal to random directions instead of just the
% coordinates. However, even in the most optimistic case where the $\mu^{(j)}$'s are incoherent in the
% $d$-dimensional space, random projection reduces the separation $\Delta_{\mu,2}$ by a factor of
% ${1\over \sqrt{d}}$. When the $\mu^{(j)}$'s are not incoherent/random, there is no guaranteed way
% for doing the projection while preserving the separation.
% %
% In \cite{kundu2012multidimensional} they discussed the multidimensional periodogram estimators which
% do not require projection, but that non-parametric method does no come with guarantee.}
%
% \qt{There are also other works (see \cite{plonka2014prony, potts2013parameter,
%     peter2013generalized}) that try to extend this matrix pencil idea to learning mixture of other
%   structured functions. Take a look later to see can we generalize. }

Another way to view the matrix pencil method is that it corresponds to
the low rank 3rd order tensor decomposition (see for example
\cite{anandkumar2014tensor}). This view will help us generalize matrix
pencil method to higher dimension $d$ in a direct way, without
projecting the signal on each coordinate and apply the univariate
algorithm multiple times.
For $m\ge k$, construct a 3rd order tensor $F \in\C^{m\times m\times 2}$ with elements of $H_0$ and
$H_1$ defined in \eqref{eq:H0H1-def} as:
\begin{align*}
  F_{i,i',j} = [H_{j-1}]_{i,i'}, \quad \fa j\in [2], i,i'\in[m].
\end{align*}
Note that the two slices along the 3rd dimension of $F$ are $H_0$ and $H_1$. Namely $F(I,I,e_1) =
H_0$, and $F(I,I,e_2) = H_1$.
Recall the matrix decomposition of $H_0$ and $H_1$ in \eqref{eq:H01-decomp}.  Since $m\ge k$ and the
$\mu^{(j)}$'s are distinct, we know that $F$ has the \emph{unique} rank $k$ tensor decomposition:
  $$F = V_m\ot V_m\ot (V_2 D_w).$$

  Given the tensor $F$, the basic idea of the well-known Jennrich's algorithm
  (\cite{harshman1970foundations,leurgans1993decomposition}) for finding the unique low rank tensor
  decomposition is to consider two random projections $v_1, v_2\in\R^{m}$, and then with high
  probability the two matrices $F(I,I,v_1)$ and $F(I,I,v_2)$ admit simultaneous diagonalization.
  Therefore, the matrix pencil method is indeed a special case of Jennrich's algorithm by setting
  $v_1 = e_1$ and $v_2 = e_2$

\sk{maybe make the above a little more clear. some people may not be all that
  familiar with Tensor notation.}\qt{done}

\subsection{The multivariate case: a toy example}

One could naively extend the matrix pencil method to higher dimensions by using taking measurements
from a hyper-grid, which is of size exponential in the dimension $d$. We now examine a toy problem
which suggests that the high dimensional case may not be inherently more difficult than the
univariate case.

The key ideas is that an appropriately sampled set can significantly reduce the number of
measurements (as compared to using all the grid points).  Tang et al \cite{tang2013compressed} made
a similar observation for the univariate case. They used a small random subset of measurements
(actually still from the grid points) and showed that this contains enough information to recover
all the measurement on the grid; the full measurements were then used for stably recovering the
point sources.

Consider the case where the dimension $d\ge k$.  \qq{Assume that $w_j$'s are real valued,} and for all
$j\in[k]$ and $n\in[d]$, the parameters $\mu^{(j)}_n$ are i.i.d. and uniformly distributed over
$[-1,+1]$.
This essentially corresponds to the standard ($L_2$) incoherence conditions (for the
$\mu^{(j)}$'s).~\footnote{ This setting is different from the 2-norm separation condition. To see
  the difference, note that the toy algorithm does not work for constant shift $\mu^{(1)} =
  \mu^{(2)}+ \Delta$. This issue is resolved in the general algorithm, when the condition is stated
  in terms of 2-norm separation.}  The following simple algorithm achieves stability with polynomial
complexity.

First, take $d^3$ number of measurements by evaluating $f(s)$ in the set $\mc S_3 = \{ s = e_{n_1} +
e_{n_2} + e_{n_3}: [n_1,n_2,n_3]\in[d]\times [d]\times [d] \}$, noting that $\mc S_3$ contains only
a subset of $d^3$ points from the grid of $[3]^d$. Then, construct a 3rd order tensor
$F\in\C^{d\times d\times d}$ with the measurements in the following way:
\begin{align*}
  F_{n_1,n_2,n_3} = f(s)\big |_{s = e_{n_1} + e_{n_2} + e_{n_3}}, \quad \fa n_1,n_2,n_3\in[d].
\end{align*}
Note that the measurement $f(e_{1} + e_{2} + e_{3}) = \sum_{j=1}^{k}w_j e^{i\pi (\mu^{(j)}_1
  +\mu^{(j)}_2 +\mu^{(j)}_3 ) } = \sum_{j=1}^{k}w_j e^{i\pi \mu^{(j)}_1} e^{i\pi\mu^{(j)}_2}
e^{i\pi\mu^{(j)}_3 }.$ It is straightforward to verify that $F$ has a rank-$k$ tensor factorization
$ F = V_d \ot V_d \ot( V_d D_w)$, where the factor $V_d\in\R^{d\times k}$ is given by:
\begin{align}
  \label{eq:Vd}
  V_d = \left[
    \begin{array}[c]{ccc}
        e^{i\pi \mu^{(1)}_1} & \dots & e^{i\pi \mu^{(k)}_1}
       \\ e^{i\pi \mu^{(1)}_2} & \dots & e^{i\pi \mu^{(k)}_2}
       \\ \vdots & \dots & \vdots
       \\ e^{i\pi \mu^{(1)}_d} & \dots & e^{i\pi \mu^{(k)}_d}
    \end{array}
  \right].
\end{align}
Under the distribution assumption of the point sources, the entries $e^{i\pi \mu^{(j)}_n}$ are
i.i.d. and uniformly distributed over the unit circle on the complex plane.  Therefore almost surely
the factor $V_d$ has full column rank, and thus the tensor decomposition is unique. Moreover here
$w_j$'s are real and each element of $V_S$ has unit norm, we have a rescaling constraint with the
tensor decomposition, with which we can uniquely obtain the factor $V_S$ and the weights in $D_w$.
By taking element-wise log of $V_S$ we can read off the parameters of the point sources from $V_S$
directly.
Moreover, with high probability, we have  that  $\cond_2(V_d)$ concentrates around 1,  thus
the simple algorithm achieves stable recovery.

\section{Main Results}

\subsection{The algorithm}

\begin{algorithm}[t!]
  \caption{General algorithm}
  \label{alg:gen}
  \DontPrintSemicolon

  \tb{Input:}  $R$, $m$, noisy measurement function $\wt f(\cdot)$.
  \BlankLine
  \tb{Output:} Estimates $\{\wh w_j, \wh \mu^{(j)}:j\in[k]\} $.
  \BlankLine
%  \begin{minipage}{5cm}
  \begin{enumerate}
        [labelindent=*,leftmargin=*,rightmargin=\dimexpr\linewidth-16cm]
      \item {\bf Take measurements:}

        Let $\mc S =\{s^{(1)},\dots, s^{(m)}\}$ be $m$ i.i.d. samples from
    the Gaussian distribution $\mc N(0, R^2 I_{d\times d})$.  Set $s^{(m+n)}=e_n$ for all $n\in[d]$
    and $s^{(m+n+1)} =0$. Denote $m'=m+d+1$.

    \qq{Take another random samples $v$ from the unit sphere, and set $v^{(1)}=v$ and $v^{(2)}=2v$.
      Construct a tensor $\wt F\in\C^{m'\times m'\times 3}$: $\wt F_{n_1,n_2,n_3} = \wt f(s)\big |_{s =
        s^{(n_1)} + s^{(n_2)} + v^{(n_3)}}$.}
  \item {\bf Tensor Decomposition:} Set $(\wh V_{S'}, \wh D_{w}) =$ TensorDecomp($\wt F$).
    % Let $\wh V_{j}$ denote the $j$-th column of the matrix $\wh V_{S'}$.

    For $j=1,\dots, k$, set $[\wh V_{S'}]_{j} =[ \wh V_{S'}]_j/[\wh V_{S'}]_{m',j}$
  \item {\bf Read of estimates:} For $j=1,\dots, k$, set $\wh \mu^{(j)} = Real(\log([\wh V_S]_{[m+1:m+d,j]})/( i
    \pi)).$

    % $\lt[\wh\mu^{(1)}, \wh\mu^{(2)}, \dots, \wh\mu^{(k)}\rt] =
    % Real(\log([\wh V_{d}])/( i \pi)).$

  \item \qq{Set  $\wh W = \arg\min_{ W\in \C^{k}} \|\wh F- \wh V_{S'}\ot \wh V_{S'}\ot \wh V_{d}D_w\|_{F}$}.

%    For $j=1,\dots, k$, set $\wh w_j = \|W_j\|/\sqrt{2}$.???????????????????????

  \end{enumerate}
 % \begin{minipage}
  \BlankLine

\end{algorithm}

We briefly describe the steps of Algorithm~\ref{alg:gen} below:
\begin{enumerate}
    [labelindent=*,leftmargin=*,label=]
  \item \tb{(Take measurements)} Given positive numbers $m$ and $R$, randomly draw a sampling set
    $\mc S=\lt\{s^{(1)}, \dots s^{(m)}\rt\}$ of $m$ i.i.d. samples of the Gaussian distribution $\mc
    N(0, R^2 I_{d\times d})$.
    Form the set \qq{$\mc S'=\mc S\cup\{ s^{(m+1)} =e_1,\dots, s^{(m+d)}=e_d, s^{(m+d+1)}=0\}\subset \R^d$.}  Denote
    $m'=m+d+1$.
    \qq{Take another independent random sample $v$ from the unit sphere, and define $v^{(1)}=v$,
      $v^{(2)}=2v$.}
    Construct the 3rd order tensor \qq{ $\wt F\in\C^{m'\times m'\times 3}$} with noise corrupted
    measurements $\wt f(s)$ evaluated at the points in \qq{$ \mc S' \oplus \mc S' \oplus
      \{v^{(1)},v^{(2)}\}$}, arranged in the following way:
    \begin{align}
      \label{eq:tensor-F}
      \wt F_{n_1,n_2,n_3} = \wt f(s)\big |_{s = s^{(n_1)} + s^{(n_2)} + v^{(n_3)}}, \fa
      n_1,n_2\in[m'], n_3\in[2].
    \end{align}

  \item \tb{(Tensor decomposition)} Define the \emph{characteristic matrix} $V_S$ to be:
    \begin{align}
      \label{eq:Vs-general}
      V_S = \left[
        \begin{array}[c]{ccc}
          e^{i\pi<\mu^{(1)}, s^{(1)}>} & \dots & e^{i\pi <\mu^{(k)}, s^{(1)}>}
          \\ e^{i\pi <\mu^{(1)}, s^{(2)}>} & \dots & e^{i\pi <\mu^{(k)}, s^{(2)}>}
          \\ \vdots & \dots & \vdots
          \\ e^{i\pi <\mu^{(1)}, s^{(m)}>} & \dots & e^{i\pi <\mu^{(k)},s^{(m)}>}
        \end{array}
      \right].
    \end{align}
    and define matrix $V'\in\C^{m'\times k}$ to be
    \begin{align}
      \label{eq:Vs-p}
      V_{S'} = \lt[
      \begin{array}[c]{c}
        V_S\\ V_d \\ 1,\dots,1
      \end{array}
      \rt],
    \end{align}
    where $V_d\in\C^{d\times k}$ is defined in \eqref{eq:Vd}.
    Define
    \begin{align*}
      V_2 = \left[
        \begin{array}[c]{ccc}
          e^{i\pi<\mu^{(1)}, v^{(1)}>} & \dots & e^{i\pi <\mu^{(k)}, v^{(1)}>}
          \\ e^{i\pi <\mu^{(1)}, v^{(2)}>} & \dots & e^{i\pi <\mu^{(k)}, v^{(2)}>}
          \\
          1 & \dots & 1
        \end{array}
      \right].
    \end{align*}

    Note that in the exact case ($\epsilon_z=0$) the tensor $F$ constructed in \eqref{eq:tensor-F}
    admits a rank-$k$ decomposition:
    \begin{align}
      \label{eq:F-decomp}
      F= V_{S'} \ot V_{S'}\ot (V_{2} D_w),
    \end{align}
    Assume that $V_{S'}$ has full column rank, then this tensor decomposition is unique up to column
    permutation and rescaling \qq{with very high probability over the randomness of the random unit
      vector $v$.  Since each element of $V_{S'}$ has unit norm, and we know that the last row of
      $V_{S'}$ and the last row of $V_2$ are all ones,} there exists a proper scaling so that we can
    uniquely recover $w_j$'s and columns of $V_{S'}$ up to common permutation.

    In this paper, we adopt Jennrich's algorithm (see Algorithm~\ref{alg:tensordecomp}) for tensor
    decomposition. Other algorithms, for example tensor power method (\cite{anandkumar2014tensor})
    and recursive projection (\cite{vempala2014max}), which are possibly more stable than Jennrich's
    algorithm, can also be applied here.

  \item \tb{(Read off estimates)} Let $\log(V_{d})$ denote the element-wise logarithm of $V_{d}$.
    The estimates of the  point sources are given by:
    \begin{align*}
      \lt[\mu^{(1)}, \mu^{(2)}, \dots, \mu^{(k)}\rt] = { \log(V_d)\over i \pi}.
    \end{align*}

\end{enumerate}

\begin{remark}
  In the toy example, the simple algorithm corresponds to using the sampling set $\mc S' = \{e_1,\dots,
  e_d\}$.  The conventional univariate matrix pencil method corresponds to using the sampling set
  $\mc S' = \{0, 1, \dots, m\}$ and the set of measurements $\mc S'\oplus\mc S'\oplus \mc S'$
  corresponds to the grid $[m]^3$.
\end{remark}

\begin{algorithm}[t!]
  \caption{TensorDecomp}
  \label{alg:tensordecomp}
  \DontPrintSemicolon

  \tb{Input:}  Tensor \qq{$\wt F\in\C^{m\times m\times 3}$}, rank $k$.
  \BlankLine
  \tb{output:}  Factor $\wh V\in\C^{m\times k}$.
  %, weights $D_{w}\in\R^{k\times k}_{diag}$.
  \BlankLine
  \begin{enumerate}
    [labelindent=*,leftmargin=*,rightmargin=\dimexpr\linewidth-16cm]
  \item Compute the truncated SVD of $\qq{\wt F(I,I,e_1)} = \wh P \wh \Lambda \wh P^\top$ with the $k$
    leading singular values.
  \item Set $\wh E =\wt F(\wh P,\wh P,I)$. Set $\wh E_1 = \wh E(I,I,e_1)$ and $\wh E_2 = \wh E(I,I,e_2)$.
  \item Let the columns of $\wh U$ be the eigenvectors of $\wh E_1\wh E_2^{-1}$
    corresponding to the $k$ eigenvalues with the largest absolute value.
  \item Set $\wh V = \sqrt{m}\wh P\wh U$.
    % $\lt[\wh\mu^{(1)}, \wh\mu^{(2)}, \dots, \wh\mu^{(k)}\rt] =
    % Real(\log([\wh V_{d}])/( i \pi)).$
  \end{enumerate}
 % \begin{minipage}
  \BlankLine

\end{algorithm}

\subsection{Guarantees}

In this section, we discuss how to pick the two parameters $m$ and $R$ and prove that the proposed
algorithm indeed achieves stable recovery in the presence of measurement noise.
\begin{theorem}[Stable recovery]
  \label{thm:main-thm}
  There exists a universal constant $C$ such that the following holds.

  Fix $\epsilon_x, \delta_s, \delta_{v} \in(0,{1\over 2})$;

  pick $m$ such that $ m\ge \max\lt\{ {k\over \epsilon_x}\sqrt{8\log{k\over \delta_s}},\ \ d\rt\}$;

  for $d=1$, pick $R\ge {\sqrt{2\log(1 + 2/\epsilon_x)} \over \pi \Delta}$;\ \ for $d\ge 2$, pick
  $ R\ge {\sqrt{2\log(k/\epsilon_x)} \over \pi \Delta}$.

  Assume the bounded measurement noise model as in \eqref{eq:measure-noise} and  that \qq{ $\epsilon_z\le
  {\Delta\delta_{v}w_{min}^2\over 100\sqrt{d}k^5} \lt({1- 2\epsilon_x\over 1+2\epsilon_x}\rt)^{2.5} $}.

  With probability at least $(1-\delta_s)$ over the random sampling of $\mc S$, and with probability
  at least $(1- \delta_{v})$ over the random projections in Algorithm~\ref{alg:tensordecomp}, the
  proposed Algorithm~\ref{alg:gen} returns an estimation of the point source signal $\wh x(t) =
  \sum_{j=1}^{k} \wh w_j \wh\delta_{\mu^{(j)}}$ with accuracy:
  \begin{align*}
    & \min_{\pi}\max\lt\{ \|\wh \mu^{(j)} - \mu^{(\pi(j))} \|_2: j\in[k]\rt\} \le
    % {C m^2 k \over\delta_{v} w_{min}} {1+ 2\epsilon_x\over 1-2\epsilon_x}\epsilon_z.
    C{ \sqrt{d}k^5 \over \Delta \delta_v} {w_{max} \over w_{min}^2} \lt({1+ 2\epsilon_x\over
      1-2\epsilon_x}\rt)^{2.5} \epsilon_z,
   \end{align*}
   where the $\min$ is over permutations $\pi$ on $[k]$.  Moreover, the proposed algorithm has time
   complexity in the order of $O((m')^3)$.
\end{theorem}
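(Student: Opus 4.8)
The plan is to reduce stable recovery to two quantitative facts about the random sampling: (i) the characteristic matrix $V_{S'}$ is well-conditioned, with a condition number bounded independently of $\Delta$ and $k$, and (ii) the third-mode factor $V_2 D_w$ has well-separated generalized eigenvalues, so that Jennrich's algorithm can stably recover the factor $V_{S'}$. Given these, the noise-to-estimate error follows from the black-box stability bound for tensor decomposition (Lemma~\ref{lem:stable-jennrich}) together with the Lipschitz nature of the final phase-extraction step. I would begin from the noiseless identity $F = V_{S'}\ot V_{S'}\ot(V_2 D_w)$ already verified in the excerpt.

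The heart of the argument is the conditioning of $V_{S'}$. The key computation is the expected Gram matrix: for $s\sim\mc N(0,R^2 I)$ the Gaussian characteristic function gives $\E[\tfrac1m (V_S^* V_S)_{j,j'}] = e^{-\frac{\pi^2R^2}{2}\|\mu^{(j)}-\mu^{(j')}\|_2^2}$, which equals $1$ on the diagonal and is at most $\epsilon_x/k$ off-diagonal under the stated choice of $R$ (the $d=1$ choice handling the single off-diagonal pair). A row-sum (Gershgorin) bound then places the eigenvalues of the expected Gram matrix in $[1-\epsilon_x,1+\epsilon_x]$. For concentration, each entry of $\tfrac1m V_S^* V_S$ is an average of $m$ unit-modulus terms, so Hoeffding applied to real and imaginary parts, together with the choice $m\ge \tfrac{k}{\epsilon_x}\sqrt{8\log(k/\delta_s)}$ and a union bound over the $k^2$ entries, shows the empirical Gram matrix lies within operator norm $\epsilon_x$ of its mean with probability $\ge 1-\delta_s$. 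Hence the singular values of $\tfrac1{\sqrt m}V_S$ lie in $[\sqrt{1-2\epsilon_x},\sqrt{1+2\epsilon_x}]$; appending the $V_d$ and all-ones rows perturbs the normalized Gram matrix by only $O((d+k)/m)$, which is negligible for the chosen $m$, so the same bound $\cond_2(V_{S'})\le\sqrt{\tfrac{1+2\epsilon_x}{1-2\epsilon_x}}$ holds. This is where the $\Delta$-free sample complexity and the cutoff frequency $O(\log(kd)/\Delta)$ enter.

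Next I would control the third mode. In Jennrich's algorithm the generalized eigenvalues produced from the first two slices are the ratios $[V_2 D_w]_{1,j}/[V_2 D_w]_{2,j}=e^{-i\pi\langle\mu^{(j)},v\rangle}$, which lie on the unit circle, so the eigengap is governed by $\min_{j\ne j'}|\langle\mu^{(j)}-\mu^{(j')},v\rangle|$. Because $v$ is a uniformly random unit vector and each $\mu^{(j)}-\mu^{(j')}$ has length $\ge\Delta$, the projection is anti-concentrated: its density is bounded by $O(\sqrt d/\Delta)$, so $\Pr[|\langle\mu^{(j)}-\mu^{(j')},v\rangle|\le t]\lesssim t\sqrt d/\Delta$. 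A union bound over the $\binom k2$ pairs yields, with probability $\ge1-\delta_v$, a uniform eigengap of order $\Delta\delta_v/(\sqrt d\,k^2)$, and $D_w$ contributes the $w_{min},w_{max}$ factors; this is the source of the $\sqrt d/\delta_v$ dependence. With both structural facts in hand I would invoke Lemma~\ref{lem:stable-jennrich}: the entrywise noise $|z|\le\epsilon_z$ induces a controlled operator-norm perturbation of the normalized, $\wh P$-projected $k\times k$ slices, and the resulting column-wise error of $\wh V_{S'}$ scales inversely with the eigengap, with $\sigma_{min}(V_{S'})$, and with $w_{min}$, accumulating to $\tfrac{\sqrt d k^5}{\Delta\delta_v}\tfrac{w_{max}}{w_{min}^2}\bigl(\tfrac{1+2\epsilon_x}{1-2\epsilon_x}\bigr)^{2.5}$. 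The noise hypothesis on $\epsilon_z$ is exactly what keeps this perturbation below the eigengap so the eigenvectors are matched to the correct permutation (Bauer--Fike / Davis--Kahan). Finally, since rows $m+1,\dots,m+d$ of $\wh V_{S'}$ are perturbations of the unit-modulus numbers $e^{i\pi\mu^{(j)}_n}$ with $\mu^{(j)}_n\in[-1,1]$, the map $\zeta\mapsto \mathrm{Real}(\log\zeta)/(i\pi)$ is Lipschitz near the unit circle, so the coordinate errors inherit the column-wise bound and combine over $d$ coordinates into the stated $\ell_2$ estimate. The runtime is dominated by the truncated SVD and slice formation of the $m'\times m'$ matrices, at cost $O((m')^3)$, with the $k\times k$ eigenproblem lower order.

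The \emph{main obstacle} is faithfully propagating the uniform entrywise bound $\epsilon_z$ through the SVD projection and the generalized eigenvalue problem while simultaneously tracking $\cond_2(V_{S'})$ and the eigengap from the random projection $v$. Obtaining the exact polynomial factors—in particular the $k^5$ and the $2.5$ exponent—requires carefully combining the operator-norm perturbation of the projected slices with the anti-concentration estimate for $v$, and verifying that the noise stays strictly below the eigengap so that columns are identified under the correct permutation $\pi$.
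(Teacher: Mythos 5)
Your proposal follows essentially the same route as the paper's proof: the Gaussian characteristic-function computation plus Gershgorin for the expected Gram matrix is Lemma~\ref{lem:p2-ingham}, the Hoeffding concentration (the paper uses matrix Hoeffding rather than entrywise Hoeffding with a union bound, an immaterial difference) is Lemma~\ref{lem:main-tech}, the anti-concentration bound giving the eigengap $\Delta\delta_v/(\sqrt{d}\,k^2)$ is exactly the claim inside the proof of Lemma~\ref{lem:stable-jennrich}, and the perturbation propagation through the SVD and the generalized eigenproblem is that lemma's content. The one loose spot is your assertion that appending the $V_d$ and all-ones rows perturbs the normalized Gram matrix only negligibly: since $d$ may be as large as $m$, those rows can shift the normalized Gram matrix by $\Theta(k)$ in operator norm, and the paper's Lemma~\ref{lem:S-step2} instead absorbs this into a multiplicative $\sqrt{1+\sqrt{k}}$ factor on $\cond_2(V_{S'})$ --- harmless here because the theorem's final $k^5$ leaves enough slack, but worth stating correctly.
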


\begin{proof}(of Theorem~\ref{thm:main-thm})
  The algorithm is correct if the tensor decomposition in Step 2 is unique, and achieves stable
  recovery if the tensor decomposition is stable.  By the stability Lemma of tensor decomposition
  (Lemma~\ref{lem:stable-jennrich}), this is guaranteed if we can bound the condition number of
  $V_{S'}$.
  It follows from Lemma~\ref{lem:S-step2} that the condition number of $V_{S'}$ is at most
  $\sqrt{1+\sqrt{k}}$ times of $\cond_2(V_S)$. By the main technical lemma
  (Lemma~\ref{lem:main-tech}) we know that with the random sampling set $\mc S$ of size $m$, the
  condition number $\cond_2(V_S)$ is upper bounded by a constant.
  Thus we can bound the distance between $V_{S'}$ and the estimation $\wh V_{S'}$ according to
  \eqref{eq:stable-jennrich-A}.

  Since we adopt Jennrich's algorithm for the low rank tensor decomposition, the overall computation
  complexity is roughly the complexity of SVD of a matrix of size $m'\times m'$, namely in the order
  of $O((m')^3)$.
\end{proof}

The next lemma shows that essentially, with overwhelming probability, all the frequencies taken
concentrate within the hyper-cube with cutoff frequency $R'$ on each coordinate, where $R'$ is
comparable to $R$,
\begin{lemma}[The cutoff frequency]
  \label{rem:cutoff}
  For $d>1$, with high probability, all of the \qq{$2(m')^2$} sampling frequencies in \qq{$\mc
    S'\oplus \mc S'\oplus \{v^{(1)}, v^{(2)}\}$} satisfy that $ \|s^{(j_1)} + s^{(j_2)} + v^{(j_3)}
  \|_{\infty} \le R',\quad \fa j_1,j_2\in[m], j_3\in[2], $ where the per-coordinate cutoff frequency
  is given by $R' = O(R \sqrt{\log{m d}})$.

  For $d=1$ case, the cutoff frequency $R'$ can be made to be in the order of $R'= O(1/\Delta)$.
\end{lemma}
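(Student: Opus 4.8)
The plan is to reduce the whole statement to a tail bound on the individual sampling points via the triangle inequality. For any indices I would first write
\[
\|s^{(j_1)} + s^{(j_2)} + v^{(j_3)}\|_\infty \le \|s^{(j_1)}\|_\infty + \|s^{(j_2)}\|_\infty + \|v^{(j_3)}\|_\infty,
\]
so that it suffices to uniformly bound $\|s^{(j)}\|_\infty$ over all $j\in[m']$ together with $\|v^{(j_3)}\|_\infty$ for $j_3\in[2]$. The deterministic pieces are immediate: among the points appended to $\mc S$, the vectors $e_1,\dots,e_d$ and $0$ satisfy $\|e_n\|_\infty = 1$ and $\|0\|_\infty = 0$, while the projection directions $v^{(1)}=v$ and $v^{(2)}=2v$ lie on the unit sphere (scaled by at most $2$), so $\|v^{(j_3)}\|_\infty \le \|v^{(j_3)}\|_2 \le 2$. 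All of these contribute only $O(1)$ and never dominate $R'$.

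For $d>1$ the only unbounded contributions come from the $m$ Gaussian samples $s^{(1)},\dots,s^{(m)}\sim \mc N(0,R^2 I_d)$. Each of the $md$ coordinates $s^{(j)}_n$ is a scalar $\mc N(0,R^2)$ variable, for which the standard Gaussian tail gives $\Pr[|s^{(j)}_n|>t] \le 2\exp(-t^2/(2R^2))$. A union bound over the $md$ coordinates with the choice $t = R\sqrt{2\log(2md/\delta)}$ shows that, with probability at least $1-\delta$, every Gaussian sample obeys $\|s^{(j)}\|_\infty \le R\sqrt{2\log(2md/\delta)}$. Combining this with the deterministic estimates yields
\[
\max_{j_1,j_2\in[m'],\ j_3\in[2]} \|s^{(j_1)} + s^{(j_2)} + v^{(j_3)}\|_\infty \le 2R\sqrt{2\log(2md/\delta)} + 2 = O\!\left(R\sqrt{\log(md)}\right),
\]
which is exactly the claimed per-coordinate cutoff $R' = O(R\sqrt{\log(md)})$.

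For $d=1$ the naive version of the above reads $R' = O(R\sqrt{\log m})$, carrying a spurious $\sqrt{\log m}$ factor on top of the $R=O(1/\Delta)$ prescribed by Theorem~\ref{thm:main-thm}. To remove it I would replace the Gaussian sampling in the first step by a bounded one-dimensional distribution, e.g. the uniform distribution on $[-R,R]$ (or a Gaussian truncated at $O(R)$) with $R=O(1/\Delta)$. Then $|s^{(j)}|\le R$ holds deterministically, and the triangle inequality gives $\|s^{(j_1)}+s^{(j_2)}+v^{(j_3)}\|_\infty \le 2R+2 = O(1/\Delta)$ with no log factor. The remaining point is to confirm that the well-conditioning of the characteristic matrix $V_S$ survives this change; this holds because in one dimension the relevant phase $e^{i\pi s(\mu^{(j)}-\mu^{(j')})}$ is already spread over the full circle once $R|\mu^{(j)}-\mu^{(j')}| \ge R\Delta \gtrsim 1$, so the single-coordinate spreading argument behind the main technical lemma (Lemma~\ref{lem:main-tech}) carries over verbatim to uniform sampling.

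The $d>1$ half is routine: it is purely a Gaussian-tail union bound, and the only subtlety is bookkeeping to confirm that the appended unit vectors, the zero vector, and the unit-sphere directions are all $O(1)$ and thus do not inflate $R'$. I expect the genuinely delicate part to be the $d=1$ claim, where attaining the tight $O(1/\Delta)$ rather than $O(\sqrt{\log m}/\Delta)$ forces a switch of sampling distribution; the effort there lies not in the cutoff bound itself but in verifying that the condition-number guarantee continues to hold under bounded sampling.
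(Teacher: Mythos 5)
Your $d>1$ argument is exactly the paper's: a union bound over the $m'd$ Gaussian coordinates plus the observation that the appended vectors $e_1,\dots,e_d$, $0$, and $v^{(1)},v^{(2)}$ contribute only $O(1)$, yielding $R'=O(R\sqrt{\log(md)})$. Your $d=1$ strategy --- switch from Gaussian to uniform sampling on a bounded interval so the cutoff is deterministically $O(R)=O(1/\Delta)$ --- is also the route the paper takes.

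The one place your sketch undersells the work is the claim that the conditioning argument ``carries over verbatim'' to uniform sampling in $d=1$. It does not quite: under uniform sampling on $[-R',R']$ the off-diagonal entries of $\E_s[X_s]$ become $\sin(\pi|\mu^{(j)}-\mu^{(j')}|R')/(\pi|\mu^{(j)}-\mu^{(j')}|R')$, which decays only like $1/(l\Delta R')$ in the separation index $l$, so the geometric-series summation used in Lemma~\ref{lem:p2-ingham} for the Gaussian case (where off-diagonals decay like $e^{-\frac{1}{2}(\pi l\Delta R)^2}$) is not available. The paper closes this by invoking the submultiplicativity inequality $\frac{\sin(a+b)}{a+b}\le\frac{\sin a}{a}\cdot\frac{\sin b}{b}$ to dominate the sinc sum by a geometric series and conclude $\sum_{j'\ne j}|Y_{j,j'}|\le\epsilon_x$ for $R'\ge 1/(\Delta(1+\epsilon_x))$. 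You correctly identified this verification as the delicate step, but your proposal as written leaves it open; with that sinc-sum bound supplied, your argument matches the paper's.
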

\begin{proof}
For $d>1$ case,  with straightforward union bound over the $m'= O(k^2)$ samples each of which has
$d$ coordinates, one can show that the cutoff frequency is in the order of $R\sqrt{\log(kd)}$,
where $R$ is in the order of $ {\sqrt{\log(k)} \over  \Delta}$ as shown in
Theorem~\ref{thm:main-thm}.

For $d=1$ case, we bound the cutoff frequency with slightly more careful analysis.  Instead of
Gaussian random samples, consider uniform samples from the interval $[-R',R']$. We can modify the
proof of Lemma~\ref{lem:p2-ingham} and show that if $R' \ge 1/(\Delta(1+\epsilon_x))$:
\begin{align*}
  \sum_{j'\neq j} |Y_{j,j'}|& = \sum_{j'\neq j} {1\over 2R'}\int_{-R',R'} e^{i\pi(\mu^{j'}-\mu^{(j)})s }
  = \sum_{j'\neq j} {\sin(\pi |\mu^{(j')}-\mu^{(j)}|R') \over \pi |\mu^{(j')}-\mu^{(j)}|R'}
  \\&\le \sum_{l=1}^k {\sin(l \pi \Delta R') \over (l \pi \Delta R')}
  \le {\sin(\pi \Delta R')/  ( \pi \Delta R') \over 1 - \sin(\pi \Delta R')/  ( \pi \Delta R')}  \le \epsilon_x
\end{align*}
where the second last inequality uses the inequality that ${\sin(a+b)\over a+b} \le {\sin(a)\over a}
{\sin(b)\over b}$.
\end{proof}

\begin{remark}[Failure probability]
  Overall, the failure probability consists of two pieces: $\delta_{v}$ for random projection of $v$,
  and $\delta_{s}$ for random sampling to ensure the bounded condition number of $V_S$. This may be
  boosed to arbitrarily high probability through repetition.
\end{remark}

\iffalse
\begin{remark}[Go off the grid]
  The key observation is that to ensuring stable recovery is equivalent to sampling to preserve the
  spectral property of $\mbb E_s[X]$, where $s$ is a spherical Gaussian with variance
  $O(1/\Delta)^2$.  The size of sampling set can thus be significantly reduced compared to
  using all the grid points, as the matrix concentration bound applies here and is independent of
  the variance of the Gaussian.

  Tang et al \cite{tang2013compressed} made the observation that for the univariate case, a small
  (size $O(k\log(1/k \Delta))$) random subset of measurements from the grid points in the
  range $[-1/\Delta,1/\Delta]$  contains enough information to recover all the
  measurement on the grid, which are then used for stably recovering the point sources.
  %
  The proposed algorithm can be viewed as combining the two steps and solve the super-resolution
  problem directly with the random samples.

  % Moreover, sampling on the grid is especially wasteful in higher dimensions.  Note that for high
  % dimensional Gaussian distribution $\mc N(0, R^2I_{d\times d})$, the probability mass concentrates
  % on the sphere $\mc P^d_{R,2}$. According to the main technical lemma, this suggests that the most
  % important samples of $s$ should those on the sphere, and the grid samples inside the hyper-ball
  % can be discarded without harming the condition number of $V_S$.

  Also, it is indeed by the assumption that we can go off the rid that we can use avoid using the
  wrap around distance.
\end{remark}
\fi

\subsection{Key Lemmas}

% \sk{we can move most of the proofs back here. and maybe leave
%   Hoeffdings in the appendix. }

\sk{also should check that we like the stability argument proof}\qt{done..}

{\bf Stability of tensor decomposition:}
In this paragraph, we give a brief description and the stability guarantee of the well-known
Jennrich's algorithm (\cite{harshman1970foundations,leurgans1993decomposition}) for low rank 3rd
order tensor decomposition.  We only state it for the symmetric tensors as appeared in the proposed
algorithm.

Consider a tensor $F = V\ot V\ot (V_2D_{w})\in\C^{m\times m\times 3}$ where the factor $V$ has full
column rank $k$. Then the decomposition is unique up to column permutation and rescaling, and
Algorithm~\ref{alg:tensordecomp} finds the factors efficiently.
Moreover, the eigen-decomposition is stable if the factor $V$ is well-conditioned and the
eigenvalues of $F_aF_b^\dag$ are well separated.
\begin{lemma}[Stability of Jennrich's algorithm]
  \label{lem:stable-jennrich}
  Consider the 3rd order tensor $F= V\ot V\ot( V_2D_w)\in\C^{m\times m \times 3}$ of rank $k\le
  m$, constructed as in Step 1 in Algorithm 1.
  %
  % Suppose $D_{j,j}\in[w_{min},w_{max}]$ for all $j\in[k]$.  and suppose each column of $V$ is
  % normalized by $\|V_j\|_2 = \sqrt{m}$.
  % and that $\cond_2(V)$ is bounded by some constant.

  Given a tensor $\wt F$ that is element-wise close to $F$, namely for all $n_1,n_2,n_3\in[m]$,
  $\big|\wt F_{n_1,n_2,n_3}- F_{n_1,n_2,n_3}\big|\le \epsilon_z $, and assume that the noise is small $
  \epsilon_z \le {\Delta \delta_v w_{min}^2 \over 100 \sqrt{dk} w_{max} \cond_2(V)^5
  } $.
  Use $\wt F$ as the input to Algorithm~\ref{alg:tensordecomp}. With probability at least
  $(1-\delta_{v})$ over the random projections $v^{(1)}$ and $v^{(2)}$, we can bound the distance
  between columns of the output $\wh V$ and that of $V$ by:
  \begin{align}
    \label{eq:stable-jennrich-A}
    \min_{\pi} \max_{j} \lt\{\|\wh V_j - V_{\pi(j)}\|_2:j\in[k]\rt\} &\le C{ \sqrt{d}k^2 \over \Delta
      \delta_v} {w_{max} \over w_{min}^2} \cond_2(V)^5 \epsilon_z,
      %   \le {C m^2 \sqrt{k}\over \sigma_{min}(V)
      % \delta_{ab}w_{min}} \epsilon_z.
      \end{align}
      where $C$ is a universal constant.
\end{lemma}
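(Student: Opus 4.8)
The plan is to analyze Algorithm~\ref{alg:tensordecomp} by writing down its exact (noiseless) behaviour, then treating the noisy run as a perturbation of it in three stages: a subspace (whitening) perturbation, a propagation to the projected pencil, and finally an eigenvector perturbation controlled by a random spectral gap.

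First I would record the noiseless skeleton. Using the identity $F(I,I,a)=V\,\mathrm{Diag}((V_2D_w)^\top a)\,V^\top$ from the notation section, the first slice is $F(I,I,e_1)=VD_{(1)}V^\top$ with $D_{(1)}=\mathrm{Diag}([V_2D_w]_{1,:})$, whose diagonal magnitudes are exactly the $|w_j|$ (the entries of $V_2$ lie on the unit circle). Since $V$ has full column rank $k$ and $D_{(1)}$ is invertible, the $k$ leading left singular vectors $P$ of this slice span the column space of $V$, so there is an invertible $M\in\C^{k\times k}$ with $V=PM$ and $\cond_2(M)=\cond_2(V)$. Projecting gives $E_1\defeq F(P,P,e_1)=MD_{(1)}M^\top$ and $E_2\defeq F(P,P,e_2)=MD_{(2)}M^\top$, hence $E_1E_2^{-1}=M\Lambda M^{-1}$ with $\Lambda=\mathrm{Diag}(D_{(1)}D_{(2)}^{-1})$. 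Because $v^{(1)}=v$ and $v^{(2)}=2v$, the weights cancel and the eigenvalues are $\lambda_j=e^{-i\pi\langle\mu^{(j)},v\rangle}$, all on the unit circle, with eigenvectors the columns of $M$. This shows the exact algorithm returns $\sqrt m\,PM$ up to diagonal scaling and permutation, i.e.\ $V$ up to the allowed ambiguities, and it isolates the two quantities that govern stability: $\cond_2(M)=\cond_2(V)$ and the minimal gap $\min_{j\neq j'}|\lambda_j-\lambda_{j'}|$.

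Next I would carry out the perturbation bookkeeping. The matrix $\wt F(I,I,e_1)-F(I,I,e_1)$ has entries bounded by $\epsilon_z$, so its spectral norm is at most $m\epsilon_z$; combined with $\sigma_{\min}(F(I,I,e_1))\ge \sigma_{\min}(V)^2 w_{\min}$, a Wedin/Davis--Kahan bound controls the principal angle between $\wh P$ and $P$ by $O(m\epsilon_z/(\sigma_{\min}(V)^2 w_{\min}))$. Propagating this subspace error together with the entrywise noise through the two projections bounds $\|\wh E_i-E_i\|$. Here the $\sqrt m$ normalization in Step~4, together with the unit-modulus normalization of the entries of $V$ (which makes $\sigma_{\max}(V)=O(\sqrt{mk})$, so that $\cond_2(V)$ absorbs all powers of $m$), is exactly what removes the explicit $m$ dependence from the final statement, leaving condition numbers and the factor $w_{\max}/w_{\min}^2$ in place of raw singular values.

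The heart of the argument, and the main obstacle, is the eigenvector perturbation of the non-normal pencil $\wh E_1\wh E_2^{-1}$, where two ingredients must be combined. The probabilistic ingredient is a spectral gap: since $|\lambda_j-\lambda_{j'}|$ is comparable to $|\langle\mu^{(j)}-\mu^{(j')},v\rangle|$ and $\|\mu^{(j)}-\mu^{(j')}\|_2\ge\Delta$, an anti-concentration estimate for the projection of a fixed vector onto a uniform random unit vector $v$ gives $\Pr[\,|\langle\mu^{(j)}-\mu^{(j')},v\rangle|\le \eta\,]\lesssim \sqrt d\,\eta/\Delta$; a union bound over the $O(k^2)$ pairs, matched to the budget $\delta_v$, forces the minimal gap to be of order $\Delta\delta_v/(\sqrt d\,k^2)$, which is the source of the $\sqrt d\,k^2/(\Delta\delta_v)$ factor. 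The deterministic ingredient is that eigenvector stability of a diagonalizable matrix degrades by the condition number of its eigenvector basis, namely $\cond_2(M)=\cond_2(V)$. Applying a Bauer--Fike/eigenvector perturbation bound with this gap and this eigenvector conditioning, then mapping back through $\wh V=\sqrt m\,\wh P\wh U$ and fixing the column scaling and permutation, produces the stated bound with the accumulated power $\cond_2(V)^5$. The delicate points I expect to be hardest are (i) making the anti-concentration bound for random projections quantitatively tight enough to yield the clean $\Delta\delta_v/(\sqrt d\,k^2)$ gap, and (ii) tracking how the several multiplicative condition-number factors (whitening, inverting $M$, the eigenvector bound, and re-expansion) compound into exactly $\cond_2(V)^5$ while cancelling all $m$ dependence.
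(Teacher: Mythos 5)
Your proposal follows essentially the same route as the paper: a Wedin bound on the whitening step, propagation of the entrywise noise to the projected pencil $\wh E_1\wh E_2^{-1}$, an anti-concentration plus union bound over the $O(k^2)$ pairs to get the eigenvalue separation $sep(D)\gtrsim \Delta\delta_v/(\sqrt d\,k^2)$, and an eigenvector perturbation bound that degrades by the conditioning of the eigenvector basis $U=P^\top V$. The only cosmetic difference is that the paper executes the eigenvector step by hand (Gershgorin for eigenvalue distinctness, then solving a linear system for the expansion coefficients of $\wh U_j$ in the basis $\{U_{j'}\}$) rather than invoking a named Bauer--Fike-type theorem.
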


\begin{proof} (of Lemma~\ref{lem:stable-jennrich}) The proof is mostly based on the arguments in
  \cite{mossel2005learning, anandkumar2012method}, we still show the clean arguments here for our
  case.

  We first introduce some notations for the exact case.
  Define $D_1 = \diag([V_2]_{1,:}D_w)$ and $D_2 = \diag([V_2]_{2,:}D_w)$.
  Recall that the symmetric matrix $F_1 = F(I,I,e_1) = VD_1 V^\top$.  Consider its SVD $F_1 = P
  \Lambda P^\top$. Denote $U = P^\top V \in\C^{k\times k}$. Define the whitened rank-$k$ tensor $$E
  = F(P, P, I) = (P^\top V) \ot( P^\top V) \ot (V_2 D_w ) = U \ot U\ot (V_2D_w)\in \C^{k\times
    k\times 3}.$$ Denote the two slices of the tensor $E$ by $E_1 = E(I,I,e_1) = U D_1 U^\top$ and
  $E_2 = E(I,I,e_2) = U D_2 U^\top$.
  Define $M = E_1E_2^{-1}$, and its eigen decomposition is given by $M = U DU^{-1}$, where $D =
  D_1D_2^{-1}$. Note that in the exact case, $D$ is given by:
  \begin{align*}
    D= \diag(e^{i\pi<\mu^{(j)}, v^{(1)} -v^{(2)}>}:j\in[k])
  \end{align*}
  Note that $|D_{j,j}|=1$ for all $j$.
  Define the minimal separation of the diagonal entries in $D$ to be:
  \begin{align*} sep(D) = \min \{\min_{j\neq j'} |D_{j,j} - D_{j',j'}|\},
  \end{align*}

  1.  We first apply perturbation bounds to show that the noise in $\wt F$ propagates  the estimates
  $\wh P$ and $\wh E$ in a mild way when the condition number of $V$ is bounded by a constant.
  \begin{proof}
    Apply Wedin's matrix perturbation bound, we have:
    \begin{align*}
      \|\wh P - P\|_2 \le{ \|\wt F_1 - F_1\|_2 \over \sigma_{min}(F_1)}
      \le{ \epsilon_z \sqrt{m}\over w_{min}\sigma_{min}(V)^2 }
    \end{align*}
    And then for the two slices of $\wh E=\wt F(\wh P,\wh P,I)$, namely $\wh E_i = E_i + Z_i$ for
    $i=1,2$, we can bound the distance between estimates and the exact case, namely $Z_i = \wh
    P^\top \wt F_i \wh P - P^\top F_i P $, by:
    \begin{align*}
      \| Z_i\| \le 8\|F_i\|\|P\|\|\wh P - P\| + 4\|P\|^2\|\wt F_i- F_i\| \le 16 {w_{max}
        \over w_{min}} \cond_2(V)^2 \epsilon_z\sqrt{m}
    \end{align*}

  \end{proof}

  2. Then, recall that $M = E_1E_2^{-1} = U D U^{-1}$.  Note that $$\wh M = (E_1 + Z_1) (E_2 +
  Z_2)^{-1} = E_1 E_2^{-1}(I - Z_2(I+E_2^{-1} Z_2)^{-1} E_2^{-1} ) + Z_1 E_2^{-1}.$$  Let $H$ and
  $G$ denote the
  perturbation matrices:
  \begin{align*}
    H = - Z_2(I+E_2^{-1} Z_2)^{-1} E_2^{-1}, \quad G = Z_1 E_2^{-1}.
  \end{align*}
  In the following claim, we show that given $\wh M = \wh E_1 \wh E_2^{-1} = M (I + H) + G$ for some
  small perturbation matrix $H$ and $G$, if the perturbation $\|H\|$ and $\|G\|$ are small enough
  and that $sep(D)$ is large enough, the eigen decomposition $\wh M = \wh U \wh D \wh U^{-1}$ is
  close to that of $M$.

  \begin{claim}
    If \qq{$\|MH + G\|\le {sep(D)\over 2\sqrt{k} \cond_2(U)}$}, then the eigenvalues of $\wh M$ are
    distinct and we can bound the columns of $\wh U$ and $U$ by:
    \begin{align*}
      \min_{\pi}\max_{j} \|\wh U_j - U_{\pi(j)}\|_2 \le  3
      {\sigma_{max}(H)\sigma_{max}(D)+\sigma_{max}(G) \over \sigma_{min}(U) sep(D)} \|\wh U_j\|_2\|V_j\|_2.
    \end{align*}
  \end{claim}
  \begin{proof}
    Let $\lambda_j$ and $U_j$ for $j\in[k]$ denote the eigenvalue and corresponding eigenvectors of
    $M$.
    If $\|MH + G\|\le {sep(D)\over 2\sqrt{k} \cond_2(U)}$, we can
    bound
    \begin{align*}
      \| \wh M - M \| = \|U^{-1}(M + (MH+G)) U - D\| = \|U^{-1}(MH+G) U\| \le sep(D)/2\sqrt{k},
    \end{align*}
    thus apply Gershgorin's disk theorem, we have $ |\wh \lambda_j - \lambda_j| \le
    \|[U^{-1}(MH+G)U]_j\|_{1} \le \sqrt{k} \| [U^{-1}(MH+G)U]_j\|_2 \le sep(D)/2$.  Therefore, the
    eigenvalues are distinct and we have
    \begin{align}
      \label{eq:lambda-ineq}
      |\wh \lambda_j - \lambda_{j'}|\ge | \lambda_j - \lambda_{j'}| - |\wh \lambda_j -
      \lambda_{j}|\ge {1\over 2} | \lambda_j - \lambda_{j'}| \ge {1\over 2} sep(D).
    \end{align}

    Note that $\{U_{j'}\}$ and $\{\wh U_{j}\}$ define two sets of basis vectors, thus we can write
    $\wh U_j = \sum_{j'}c_{j'} U_{j'}$ (with the correct permutation for columns of $\wh U_j$ and
    $U_j$) for some coefficients $\sum_{j'}c_{j'}^2=1$.  Apply first order Taylor expansion of
    eigenvector definition we have:
    \begin{align*}
      \wh \lambda_j \wh U_j = \wh M \wh U_j = (M + (MH+G)) \sum_{j'}c_{j'} U_{j'} = \sum_{j'}
      \lambda_{j'} c_{j'} U_{j'} + (MH+G) \wh U_j.
    \end{align*}
    Since we also have $ \wh \lambda_j \wh U_j = \sum_{j'} \wh \lambda_{j} c_{j'} U_{j'}$, we can
    write $\sum_{j'} (\wh \lambda_{j} - \lambda_{j'}) c_{j'} U_{j'} = (MH+G)\wh U_{j}$, and we can
    solve for the coefficients $c_{j'}$'s from the linear system as $[(\wh \lambda_j -
    \lambda_{j'})c_{j'}:j'\in[k] ]= U^{-1} (MH+G)\wh U_{j}$.
    Finally plug in the inequality in \eqref{eq:lambda-ineq} we have that for any $j$:
    \begin{align*}
      \|\wh U_j - U_j\|_2^2 &= \sum_{j'\neq j} c_{j'}^2 \|U_{j'}\|_2^2 + (c_{j} - 1)^2 \|U_{j}\|_2^2
      \\
      &\le 2 \sum_{j'\neq j} c_{j'}^2 \|V_{j'}\|_2^2
      \\
      &\le 8 {\|U^{-1} (MH+G) \wh U_{j}\|_2^2 \over sep(D)^2}
      \\
      &\le 8 { (\sigma_{max}( D) \sigma_{max}(H) + \sigma_{max}(G))^2 \over
        \sigma_{min}(U)^2sep(D)^2}\|\wh U_j\|_2^2 \|V_{j}\|_2^2
    \end{align*}
  \end{proof}

  3. Note that in the above bound for $\|\wh U_j - U_j\|$, we can bound the perturbation matrices
  $H$ and $G$ by:
    \begin{align*}
      &\sigma_{max} (H) \le {\|Z_2\| \over (1- \sigma_{max}(E_2^{-1}Z_2)) \sigma_{min}(E_2) } \le
      {\|Z_2\| \over \sigma_{min}(E_2) - \|Z_2\|} \le { \|Z_2\|\over \sigma_{min}(U)^2 \sigma_{min}(D_{2}) -
        \|Z_2\|},
      \\
      &\sigma_{max}(G) \le {\sigma_{max}(Z_1) \over \sigma_{min}(E_2)} \le { \|Z_2\| \over
        \sigma_{min}(U)^2 \sigma_{min}(D_{2})},
    \end{align*}
    Note that $\sigma_{min}(D_2) \ge w_{min} $ and $\sigma_{\max}(D) = 1$ by definition.  In the
    following claim, we apply anti-concentration bound to show that with high probability
    $sep(D)$ is large.

    \begin{claim}
      For any $\delta_{v}\in(0,1)$, with probability at least $1-\delta_v$, we can bound $sep(D)$
      by:
      \begin{align*}
        sep(D) \ge {\Delta\delta_v\over \sqrt{d} k^2}.
      \end{align*}

  \end{claim}
  \begin{proof}
    Denote $v = v^{(1)} - v^{(2)}$, and note that $\|v\| \le \sqrt{2}$. In the regime we concern,
    for any pair $j\neq j'$, we have $|e^{i\pi<\mu^{(j)}, v>} - e^{i\pi<\mu^{(j')}, v>} | \le
    |<\mu^{(j)} - \mu^{(j')},v>|$. Apply Lemma~\ref{lem:anti-concentration}, we have that for
    $\delta\in(0,1)$,
      \begin{align*}
        \mbb P( |<\mu^{(j)} - \mu^{(j')},v>| \le \|\mu^{(j)} - \mu^{(j')}\| {\delta\over
          \sqrt{d}})\le \delta.
      \end{align*}
      Take a union bound over all pairs of $j\neq j'$, we have that
      \begin{align*}
        \mbb P\lt( \tx{for some} j\neq j', |<\mu^{(j)} - \mu^{(j')},v>| \le \|\mu^{(j)} -
        \mu^{(j')}\| {\delta\over \sqrt{d} k^2}\rt)\le k^2 {\delta\over k^2} = \delta.
      \end{align*}
      Recall that $\Delta = \min_{j\neq j'} \|\mu^{(j)}-\mu^{(j')}\|$.
  \end{proof}

  4.  Recall that $U = P^\top V$. Note that since $P$ has orthonormal columns, we have
  $\sigma_{min}(U) = \sigma_{min}(V)$ and $\|U_i\|\le \|V_i\|=\sqrt{m}$.

  Finally we apply perturbation bound to the estimates $\wh V_i = \wh P\wh U_i$ and conclude with
  the above inequalities:
  \begin{align*}
    \|\wh V_i - V_i\|&\le 2(\| \wh P - P \| \|U_i\| + \|P\|\|\wh U_i - U_i\|)
    \\
    &\le 2\lt( { \epsilon_z \sqrt{m}\over w_{min}\sigma_{min}(V)^2 } + 3
    {\sigma_{max}(H)\sigma_{max}(D)+\sigma_{max}(G) \over \sigma_{min}(U) sep(D)} \|V_i\|\rt)\|V_i\|
    \\
    &\le 2\lt( { \epsilon_z \sqrt{m}\over w_{min}\sigma_{min}(V)^2 } + 6 { \|Z_2\| \|V_i\|\over
      (\sigma_{min}(V)^2 \sigma_{min}(D_{2}) - \|Z_2\| ) \sigma_{min}(V) sep(D)} \rt)\|V_i\|
    \\
    &\le C({ \sqrt{d} k^2m \over \Delta \delta_v}  {w_{max} \cond_2(V)^2
      \over w_{min}^2\sigma_{min}(V)^3}) \|V_i\|\epsilon_z,
    \end{align*}
for some universal constant $C$.    Note that  the last inequality used the assumption that $\epsilon_z$ is small enough.
\end{proof}

{\bf Condition number of $V_{S'}$:}
The following lemma is helpful:
\begin{lemma}
\label{lem:S-step2}

Let $V_{S'}\in\C^{(m+d+1)\times k}$ be the factor as defined in \eqref{eq:Vs-p}. Recall that $V_{S'} =
[V_S; V_d; 1]$, where $V_d$ is defined in \eqref{eq:Vd}, and $V_S$ is the characteristic matrix defined
in \eqref{eq:Vs-general}.

We can bound the condition number of $V_{S'}$ by
\begin{align}
  \label{eq:cond2-vs-vsp}
    \cond_{2}(V_{S'})\le \sqrt{1+\sqrt{k}}\cond_2(V_S).
  \end{align}
\end{lemma}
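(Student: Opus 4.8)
The goal is to bound $\cond_2(V_{S'}) = \sigma_{\max}(V_{S'})/\sigma_{\min}(V_{S'})$ in terms of $\cond_2(V_S)$, where $V_{S'} = [V_S; V_d; \mathbf{1}^\top]$ is $V_S$ with $d+1$ extra rows appended (the rows of $V_d$ and one all-ones row). Since stacking rows only adds nonnegative contributions to $V_{S'}^* V_{S'}$, the singular values can only go up, so intuitively $\sigma_{\min}(V_{S'}) \ge \sigma_{\min}(V_S)$ and $\sigma_{\max}$ grows only mildly. The cleanest way to make this precise is to work with the Gram matrix and use the identity $V_{S'}^* V_{S'} = V_S^* V_S + V_d^* V_d + \mathbf{1}\mathbf{1}^\top$.

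First I would control the singular values via the Gram matrices. For any unit vector $x \in \C^k$,
\[
\|V_{S'} x\|_2^2 = \|V_S x\|_2^2 + \|V_d x\|_2^2 + |\mathbf{1}^\top x|^2 \ge \|V_S x\|_2^2,
\]
which immediately gives $\sigma_{\min}(V_{S'}) \ge \sigma_{\min}(V_S)$. For the numerator I would bound $\sigma_{\max}(V_{S'})^2 = \|V_{S'}^* V_{S'}\|_2 \le \|V_S^* V_S\|_2 + \|V_d^* V_d\|_2 + \|\mathbf{1}\mathbf{1}^\top\|_2$. Here I use that every entry of $V_S$, $V_d$, and the last row has unit modulus. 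The appended block $[V_d; \mathbf{1}^\top]$ is a $(d+1)\times k$ matrix all of whose entries have modulus $1$, so its largest singular value is at most $\sqrt{(d+1)k}$, giving $\|V_d^*V_d\|_2 + \|\mathbf{1}\mathbf{1}^\top\|_2 \le (d+1)k$; more usefully, since only $d+1$ rows are added, I want to express this relative to the $m$ rows already in $V_S$.

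The key quantitative step is to convert the additive bound into the multiplicative factor $\sqrt{1+\sqrt{k}}$ claimed in \eqref{eq:cond2-vs-vsp}. The natural route is to normalize by the number of rows: since $V_S$ has $m$ unit-modulus rows and $V_{S'}$ has $m' = m+d+1$ such rows, one expects $\sigma_{\max}(V_S) \approx \sqrt{m}$ and $\sigma_{\max}(V_{S'}) \approx \sqrt{m'}$ when the matrix is well-conditioned. I would write
\[
\sigma_{\max}(V_{S'})^2 \le \sigma_{\max}(V_S)^2 + (d+1)k,
\]
and then divide through by $\sigma_{\min}(V_{S'})^2 \ge \sigma_{\min}(V_S)^2$ to obtain
\[
\cond_2(V_{S'})^2 \le \cond_2(V_S)^2 + \frac{(d+1)k}{\sigma_{\min}(V_S)^2}.
\]
To finish I would invoke that, by the main technical lemma, $V_S$ is well-conditioned with $\sigma_{\min}(V_S)^2 \gtrsim m$ (since $m \ge d$ in the theorem's parameter choice), so the additive term is dominated by a $\sqrt{k}$-type factor relative to $\cond_2(V_S)^2$, collapsing to the stated $\cond_2(V_{S'}) \le \sqrt{1+\sqrt{k}}\,\cond_2(V_S)$.

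The main obstacle I anticipate is matching the constant $\sqrt{1+\sqrt{k}}$ exactly: this depends on the precise interplay between $m$, $d$, and $\sigma_{\min}(V_S)$, and in particular on using $m \ge d$ together with the lower bound $\sigma_{\min}(V_S)^2 \gtrsim m$ from Lemma~\ref{lem:main-tech}. Getting the clean $\sqrt{k}$ (rather than, say, $k$ or $d$) in the bound requires that the $d+1$ appended rows contribute at most an $O(\sqrt{k})$ factor relative to $\sigma_{\min}(V_S)^2$, which is exactly where the assumption $m \ge d$ is consumed. The rest of the argument — monotonicity of singular values under row augmentation, the triangle inequality on Gram matrices, and the unit-modulus entry bound — is routine, so the care is concentrated in tracking these constants correctly.
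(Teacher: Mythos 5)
Your skeleton matches the paper's: both arguments use that appending rows can only increase $\|V_{S'}w\|_2^2$ (hence $\sigma_{\min}(V_{S'})\ge\sigma_{\min}(V_S)$), bound the contribution of the appended rows via the unit modulus of the entries, and then take a ratio. The genuine gap is in your final quantitative step. You normalize the additive term by $\sigma_{\min}(V_S)^2$, arriving at $\cond_2(V_{S'})^2\le \cond_2(V_S)^2 + (d+1)k/\sigma_{\min}(V_S)^2$, and then invoke Lemma~\ref{lem:main-tech} to get $\sigma_{\min}(V_S)^2\gtrsim m$. This does not close: with $m$ as small as $\max\{d,\,O(k\log k)\}$, the term $(d+1)k/m$ is of order $k$, not $\sqrt{k}$, so you obtain a factor $\sqrt{1+O(k)}$ rather than the claimed $\sqrt{1+\sqrt{k}}$. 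Moreover, routing through Lemma~\ref{lem:main-tech} makes the conclusion hold only on the high-probability event of the random sampling, whereas the lemma as stated (and as proved in the paper) is deterministic given the structure of $V_{S'}$ and $m\ge d$.

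The paper closes the gap differently: writing $\lambda=\sigma_{\min}(V_S)$ and $\lambda'=\sigma_{\max}(V_S)$, it factors out $\cond_2(V_S)^2$ \emph{before} absorbing the extra rows, bounding $\cond_2(V_{S'})^2\le \bigl((\lambda')^2+\sqrt{k}d\bigr)/\lambda^2=\bigl(1+\sqrt{k}d/(\lambda')^2\bigr)\cond_2(V_S)^2$, and then uses the purely deterministic fact $(\lambda')^2\ge\|V_Se_1\|_2^2=m\ge d$ (each column of $V_S$ has squared norm exactly $m$). In other words, the correct quantity to divide the additive term by is $\sigma_{\max}(V_S)^2$, not $\sigma_{\min}(V_S)^2$; that substitution is what removes both the probabilistic dependence and a factor of $\cond_2(V_S)^2$. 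One further caveat: your worst-case estimate $(d+1)k$ for $\|V_dw\|_2^2+|\mathbf{1}^\top w|^2$ is the honest one (it is essentially attained when the columns of $V_d$ coincide), whereas the paper asserts $\sqrt{k}d$ for this quantity without justification; that discrepancy is precisely where the $\sqrt{k}$ in \eqref{eq:cond2-vs-vsp} originates, so even after switching the denominator to $\sigma_{\max}(V_S)^2$ you would need $m\gtrsim d\sqrt{k}$, or the paper's sharper row bound, to recover the stated constant.
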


\begin{proof}(of Lemma~\ref{lem:S-step2})
  By definition, there exist some constants $\lambda$ and $\lambda'$ such that $\cond_{2}(V_S) =
  \lambda'/\lambda$, and for all $w\in\mc P_{1,2}^k$, we have $ \lambda \le \|V_{S} w\| \le \lambda'$.
  Note that each element of the factor $V_{S'}$ lies on the unit circle in the complex plane, then
  we have:
  \begin{align*}
    \lambda^2 \le \|V_{S} w\|_2^2 \le \|V_{S'}w\|_2^2 \le (\lambda')^2 + \sqrt{k}d.
  \end{align*}
  We can bound the condition number of $V_{S'}$ by:
  \begin{align*}
    \cond_{2}(V_{S'}) \le \sqrt{(\lambda')^2 + \sqrt{k}d \over \lambda^2} = \sqrt{1+{\sqrt{k}d \over
        (\lambda')^2} }\cond_2(V_S) \le \sqrt{1+{\sqrt{k}} }\cond_2(V_S),
  \end{align*}
  where the last inequality is because that $\max_{w} \|V_{S} w\|_2^2 \ge \|V_{S} e_1\|_2^2 = d$, we
  have $(\lambda')^2 \ge d$.

\end{proof}

{\bf Condition number of the characteristic matrix $V_S$:}
Therefore, the stability analysis of the proposed algorithm boils down to understanding the relation
between the random sampling set $\mc S$ and the condition number of the characteristic matrix $V_S$.
This is analyzed in Lemma~\ref{lem:main-tech} (main technical lemma).

\begin{lemma}
  \label{lem:p2-ingham}
  % [p=2 Ingham, Gaussian proof]
  For any fixed number $\epsilon_x\in(0,1/2)$.  Consider a Gaussian vector $s$ with distribution
  $\mc N(0, R^2 I_{d\times d})$, where $R\ge {\sqrt{2\log(k/\epsilon_x)} \over \pi \Delta}$
  for $d\ge 2$, and $R\ge {\sqrt{2\log(1 + 2/\epsilon_x)} \over \pi \Delta}$ for $d=1$.
  Define the Hermitian random matrix $X_s \in\C^{k\times
    k}_{herm}$ to be
  \begin{align}
    \label{eq:def-X}
    X_s =\lt[
    \begin{array}[c]{c}
      e^{-i\pi<\mu^{(1)},s>}\\ e^{-i\pi<\mu^{(2)},s>} \\ \vdots \\e^{-i\pi<\mu^{(k)},s>}
    \end{array}
    \rt] \lt[ e^{i\pi<\mu^{(1)},s>}, e^{i\pi<\mu^{(2)},s>}, \dots e^{i\pi<\mu^{(k)},s>}\rt].
  \end{align}
  We can bound the spectrum of $\mbb E_{s}[X_s]$ by:
  \begin{align}
    \label{eq:X-bound}
    (1-\epsilon_x)I_{k\times k} \preceq \mbb E_s[X_s] \preceq (1+\epsilon_x)I_{k\times k}.
  \end{align}
\end{lemma}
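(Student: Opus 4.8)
The plan is to compute $\mbb E_s[X_s]$ in closed form and then reduce the spectral claim \eqref{eq:X-bound} to an elementary diagonal-dominance estimate. First I would record that $[X_s]_{j,j'} = e^{i\pi\langle \mu^{(j')}-\mu^{(j)},s\rangle}$, so that by the Gaussian characteristic function (for fixed $a$, $\mbb E_s[e^{i\langle a,s\rangle}] = e^{-R^2\|a\|_2^2/2}$ when $s\sim\mc N(0,R^2 I_{d\times d})$) applied with $a=\pi(\mu^{(j')}-\mu^{(j)})$, the matrix $Y:=\mbb E_s[X_s]$ has entries
\[
Y_{j,j'} = e^{-\frac{\pi^2 R^2}{2}\|\mu^{(j)}-\mu^{(j')}\|_2^2}.
\]
In particular $Y$ is real, symmetric (hence Hermitian), with all diagonal entries equal to $1$.

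Second, I would invoke Gershgorin's disk theorem: since $Y$ is Hermitian with unit diagonal, every (real) eigenvalue $\lambda$ obeys $|\lambda-1|\le \max_j\sum_{j'\neq j}|Y_{j,j'}|$. Hence \eqref{eq:X-bound} follows the moment we establish $\max_j\sum_{j'\neq j}|Y_{j,j'}|\le\epsilon_x$, and the entire lemma reduces to bounding this off-diagonal row sum under the two stated choices of $R$.

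Third, for $d\ge 2$ I would use only the minimal-separation bound $\|\mu^{(j)}-\mu^{(j')}\|_2\ge\Delta$: each of the at most $k-1$ off-diagonal terms is bounded by $e^{-\pi^2 R^2\Delta^2/2}$, so $\sum_{j'\neq j}|Y_{j,j'}|\le k\,e^{-\pi^2 R^2\Delta^2/2}$. Demanding this be $\le\epsilon_x$ is exactly $\tfrac{\pi^2 R^2\Delta^2}{2}\ge\log(k/\epsilon_x)$, i.e. the stated $R\ge\sqrt{2\log(k/\epsilon_x)}/(\pi\Delta)$.

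For $d=1$ I would sharpen the row sum using the ordering of the points on the line, which is what removes the $k$-dependence. Sorting the $\mu^{(j)}$, the point that is $l$ positions away from a fixed one lies at distance $\ge l\Delta$, so with $q:=e^{-\pi^2 R^2\Delta^2/2}$ and the elementary bound $e^{-cl^2}\le e^{-cl}$ for $l\ge 1$,
\[
\sum_{j'\neq j}|Y_{j,j'}| \le 2\sum_{l=1}^{\infty} e^{-\frac{\pi^2 R^2\Delta^2}{2}l^2} \le 2\sum_{l=1}^{\infty} q^{l} = \frac{2q}{1-q}.
\]
Requiring $\tfrac{2q}{1-q}\le\epsilon_x$ rearranges to $q\le\epsilon_x/(2+\epsilon_x)$, i.e. $\tfrac{\pi^2 R^2\Delta^2}{2}\ge\log(1+2/\epsilon_x)$, which is precisely the $d=1$ choice of $R$. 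The only genuinely delicate step is this last refinement: the $k$-free cutoff comes entirely from the geometric decay of the off-diagonal entries along the sorted line, which caps the row sum by a convergent series independent of $k$. In higher dimensions no such ordering is available, so the union bound over the $k-1$ neighbors is essentially forced at this level of generality, and that is exactly why the $d\ge 2$ cutoff carries the extra $\log k$ factor.
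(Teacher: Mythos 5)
Your proposal is correct and follows essentially the same route as the paper's own proof: compute $Y=\mbb E_s[X_s]$ entrywise via the Gaussian characteristic function, apply Gershgorin's disk theorem to the unit-diagonal Hermitian matrix, bound each off-diagonal entry by $\epsilon_x/k$ when $d\ge 2$, and for $d=1$ exploit the ordering on the line so that the row sum is dominated by the geometric series $2q/(1-q)$ with $q=e^{-\pi^2R^2\Delta^2/2}$, which yields exactly the $k$-free cutoff $R\ge\sqrt{2\log(1+2/\epsilon_x)}/(\pi\Delta)$. No gaps; your intermediate inequality $e^{-cl^2}\le e^{-cl}$ is a slightly cleaner way to reach the same geometric-series bound the paper states.
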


\begin{proof}(of Lemma~\ref{lem:p2-ingham})
  Denote $Y = \mbb E_s[X_s]$.  Note that $Y_{j,j}=1$ for all diagonal entries.
  For $d=1$ case, the point sources all lie on the interval $[-1,1]$, we can bound the summation of
  the off diagonal entries in the matrix $Y$ by:
  \begin{align*}
    \sum_{j'\neq j} |Y_{j,j'}|& = \mbb E_s[e^{i\pi<\mu^{(j')}-\mu^{(j)}, s>}]
    \\
    &= \sum_{j'\neq j} e^{-{1\over 2} \pi^2 \|\mu^{(j)}-\mu^{(j')}\|_2^2
      R^2}
    \\
    &\le 2( e^{-{1\over 2}( \pi \Delta R)^2} + e^{-{1\over 2}( \pi (2 \Delta) R)^2} +
    \dots + e^{-{1\over 2}( \pi (k/2)\Delta R)^2})
    \\
    &\le {2  e^{-{1\over 2}( \pi \Delta R)^2} / (1- e^{-{1\over 2}( \pi \Delta R)^2}) }
    \\
    &\le \epsilon_x.
  \end{align*}
  For $d\ge 2$ case, we simply bound each off-diagonal entries by:
  \begin{align*}
    Y_{j,j'} &
    =  e^{-{1\over 2} \pi^2 \|\mu^{(j)}-\mu^{(j')}\|_2^2 R^2}
    \le e^{-{1\over 2} \pi^2 \Delta^2 R^2} \le \epsilon_x/k.
  \end{align*}
  Apply Lemma~\ref{lem:geshgorin-disk} (Gershgorin's Disk Theorem) and we know that all the
  eigenvalues of $Y$ are bounded by $1\pm \epsilon_x$.
\end{proof}

\begin{lemma}[Main technical lemma]
  \label{lem:main-tech}
  In the same setting of Lemma~\ref{lem:p2-ingham},
  %{lem:main-tech}.
  Let $\mc S= \{s^{(1)},\dots, s^{(m)}\}$ be $m$
  independent samples of the Gaussian vector $s$.
  For $ m\ge {k\over \epsilon_x}\sqrt{8\log{k\over \delta_s}}$, with probability at least
  $1-\delta_s$ over the random sampling, the condition number of the
  factor $V_S$ is bounded by:
%$   \cond_2(V_S) \le \sqrt{1+ 2\epsilon_x / 1-2\epsilon_x} $.
  \begin{align}
   \label{eq:cond-Vs}
   \cond_2(V_S) \le \sqrt{1+ 2\epsilon_x\over 1-2\epsilon_x}.
 \end{align}
\end{lemma}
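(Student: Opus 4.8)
The plan is to bound the condition number of $V_S$ by relating $\frac{1}{m} V_S^* V_S$ to its expectation $\mathbb{E}_s[X_s]$, whose spectrum is already pinned down to $[1-\epsilon_x, 1+\epsilon_x]$ by Lemma~\ref{lem:p2-ingham}. The first step is to observe the identity $\frac{1}{m} V_S^* V_S = \frac{1}{m}\sum_{\ell=1}^{m} X_{s^{(\ell)}}$, where each $X_{s^{(\ell)}}$ is the rank-one Hermitian matrix defined in \eqref{eq:def-X} for the sample $s^{(\ell)}$. Indeed, the $(j,j')$ entry of $V_S^* V_S$ is $\sum_{\ell} e^{-i\pi\langle \mu^{(j)}, s^{(\ell)}\rangle} e^{i\pi\langle \mu^{(j')}, s^{(\ell)}\rangle}$, which is exactly $\sum_\ell [X_{s^{(\ell)}}]_{j,j'}$. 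Thus the empirical average $\frac{1}{m} V_S^* V_S$ is a sum of $m$ i.i.d. Hermitian rank-one matrices with mean $\mathbb{E}_s[X_s] = Y$.

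The second step is a matrix concentration argument. Each summand $X_s$ has operator norm exactly $k$ (it is rank one with trace $k$, since every diagonal entry equals $1$), so the summands are bounded. I would apply a matrix concentration inequality — either the matrix Bernstein or matrix Hoeffding inequality — to control $\|\frac{1}{m}\sum_\ell X_{s^{(\ell)}} - Y\|_2$. The chosen threshold $m \ge \frac{k}{\epsilon_x}\sqrt{8\log\frac{k}{\delta_s}}$ is precisely the form one expects from such a bound: the $k$ comes from the per-sample norm, the $\sqrt{\log(k/\delta_s)}$ from the dimension-$k$ union bound inside the matrix inequality, and the $1/\epsilon_x$ from demanding a deviation of order $\epsilon_x$. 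The goal is to conclude that, with probability at least $1-\delta_s$,
\begin{align*}
  \lt\| \tfrac{1}{m} V_S^* V_S - Y \rt\|_2 \le \epsilon_x.
\end{align*}

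The third step combines this deviation with Lemma~\ref{lem:p2-ingham}. By Weyl's inequality, every eigenvalue of $\frac{1}{m} V_S^* V_S$ lies within $\epsilon_x$ of the corresponding eigenvalue of $Y$, and since the eigenvalues of $Y$ lie in $[1-\epsilon_x, 1+\epsilon_x]$, we get
\begin{align*}
  (1-2\epsilon_x)\, I \preceq \tfrac{1}{m} V_S^* V_S \preceq (1+2\epsilon_x)\, I.
\end{align*}
Since $\sigma_{\min}(V_S)^2 = m\,\lambda_{\min}(\frac{1}{m}V_S^* V_S)$ and $\sigma_{\max}(V_S)^2 = m\,\lambda_{\max}(\frac{1}{m}V_S^* V_S)$, the factors of $m$ cancel in the ratio and we obtain $\cond_2(V_S)^2 = \sigma_{\max}^2/\sigma_{\min}^2 \le \frac{1+2\epsilon_x}{1-2\epsilon_x}$, which is exactly \eqref{eq:cond-Vs}. (Note $1-2\epsilon_x>0$ since $\epsilon_x<1/2$, so this is well defined.)

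The main obstacle I anticipate is selecting the right matrix concentration inequality and matching its constants to the stated sample complexity. The summands are complex Hermitian rather than real symmetric, so I would either work directly with a complex matrix Hoeffding/Bernstein bound or reduce to the real case via the standard $2k \times 2k$ real embedding of Hermitian matrices; care is needed that this embedding does not spoil the dimension factor inside the logarithm. Verifying that the variance proxy (or the uniform norm bound $k$) plugged into the inequality yields precisely the threshold $\frac{k}{\epsilon_x}\sqrt{8\log\frac{k}{\delta_s}}$, with the constant $8$ under the square root, is the one genuinely delicate bookkeeping step; everything else is the routine translation between singular values of $V_S$ and eigenvalues of the Gram matrix.
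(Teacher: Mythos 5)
Your proposal is correct and follows essentially the same route as the paper: identify $\frac{1}{m}V_S^* V_S$ with the empirical average of the i.i.d. rank-one Hermitian matrices $X_{s^{(\ell)}}$, apply matrix Hoeffding with the uniform bound $X_s^2 \preceq k^2 I$ to get deviation at most $\epsilon_x$ from $\mathbb{E}_s[X_s]$, and combine with Lemma~\ref{lem:p2-ingham} to place the spectrum in $[1-2\epsilon_x,\,1+2\epsilon_x]$. The stated sample-size threshold indeed falls out of the paper's Matrix Hoeffding bound (Lemma~\ref{lem:matrix-hoeffding}) with $\sigma^2=k^2$, exactly as you anticipated.
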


\begin{proof}(of Lemma~\ref{lem:main-tech})
  Let $\{X^{(1)},\dots, X^{(m)}\}$ denote the i.i.d.  samples of the random matrix $X_s$ defined in
  \eqref{eq:def-X}, with $s$ evaluated at the i.i.d. random samples in $\mc S$.
  Note that we have
  \begin{align*}
    \|V_Sw\|_2^2 = w^\top V_S^* V_S w = w^\top \lt({1\over m} \sum_{i=1}^m X^{(i)} \rt) w.
  \end{align*}
  By definition of condition number, to show that $ \cond_2(V_S) \le \sqrt{1+ 2\epsilon_x\over
    1-2\epsilon_x}$, it suffices to show that $$(1-2\epsilon_x) I_{k\times k} \preceq \lt({1\over m}
  \sum_{i=1}^m X^{(i)} \rt) \preceq (1+ 2\epsilon_x) I_{k\times k}.$$

  By Lemma~\ref{lem:p2-ingham}, the spectrum of $\mbb E_s[X_s]$ lies in $(1-\epsilon_x,
  1+\epsilon_x)$.  Here we only need to show that the spectrum of the  sample mean $\lt({1\over m} \sum_{i=1}^m X^{(i)}
  \rt) $ is close to the spectrum of the  expectation $\mbb E_s[X_s]$.
  Since each element of the random matrix $X_s\in\C^{k\times k}$ lies on the unit circle in the
  complex plane, we have $X_s^2\preceq k^2 I$ almost surely. Therefore we can apply
  Lemma~\ref{lem:matrix-hoeffding} (Matrix Hoeffding) to show that for $m> {k\over
    \epsilon_x}\sqrt{8\log{k\over \delta_s}}$, with probability at least $1-\delta_s$, it holds that $\|
  {1\over m} \sum_{i=1}^m X^{(i)} - \mbb E_s[ X_s ]\|_2 \le \epsilon_x$.
\end{proof}

\section{Discussions}

\subsection{Numerical results}
We empirically demonstrate the performance of the proposed super-resolution algorithm in this section.

First, we look at a simple instance with dimension $d=2$ and the minimal separation $\Delta = 0.05$.
\qq{Our perturbation analysis of the stability result limits to small noise, i.e. $\epsilon_z$ is
  inverse polynomially small in the dimensions, and the number of measurements $m$ needs to be
  polynomially large in the dimensions.
  However, we believe these are only the artifact of the crude analysis, instead of being intrinsic
  to the approach.}
In the following numerical example, we examine a typical instance of 8 randomly generated 2-D point
sources.
The minimal separation $\Delta$ is set to be 0.01, and the weights are uniformly distributed in $[0.1,1.1]$
The measurement noise level $\epsilon_z$ is set to be 0.1, and we take only $2178$ noisy measurements ($\ll 1/\Delta^2$).
Figure~\ref{fig:noisy recovery in 2D} shows reasonably good recovery result.
\begin{figure}[H]
  \centering{
    \includegraphics[width = 0.5\textwidth]{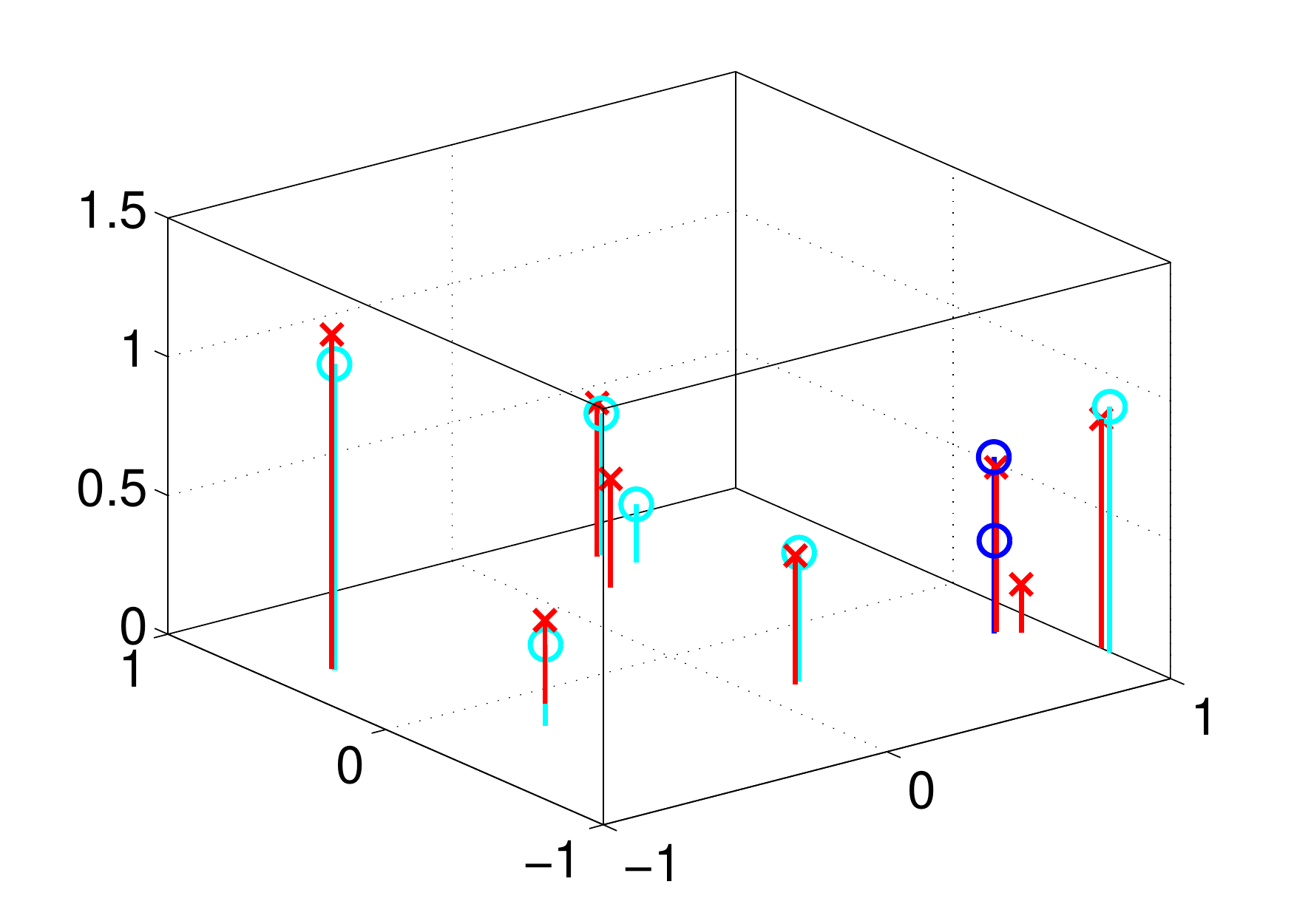}
  }
  \caption{The xy plane shows the coordinates of the point sources: true point sources (cyan), the
    two closest points (blue), and the estimated points (red); the z axis shows the corresponding
    mixing weights.
    Dimension $d =2$, number of point sources $k=8$, minimal separation $\Delta=0.05$ and the
    measurement noise level $\epsilon_z = 0.1$; we set the cutoff frequency $R = 200$ (in the same order
    as $1/\Delta$), take $2178$ random measurements ($\ll 1/\Delta^2$). }
  \label{fig:noisy recovery in 2D}
\end{figure}

Next, we examine the phase transition properties implied by the main theorem.

Figure~\ref{fig:phase-transition1} shows the dependency between the cutoff frequency and the minimal
separation.
 For each fixed pair of the minimal separation and the cutoff frequency $(\Delta, R)$, we randomly
 generate $k=8$ point sources in $4$-dimensional space while maintaining the same
 minimal separation. The weights are uniformly distributed in $[0.1,1.1]$.
The recovery is considered successful if the error $\sum_{j\in[k]} \sqrt{\|\wh \mu^{(j)} -
  \mu^{(j)}\|_2^2} \le 0.1$ (on average it tolerates around $4\%$ error per coordinate per
point source).
This process is repeated 50 times and the rate of success was recorded.
Figure~\ref{fig:phase-transition1} plots the success rate in gray-scale, where 0 is black and 1 is
white.

We observe that there is a sharp phase transition characterized by a linear relation
between the cutoff frequency and the inverse of minimal separation, which is implied by
Theorem~\ref{thm:main-thm}.

\begin{figure}[H]
  \centering{
    \includegraphics[width = 0.6\textwidth]{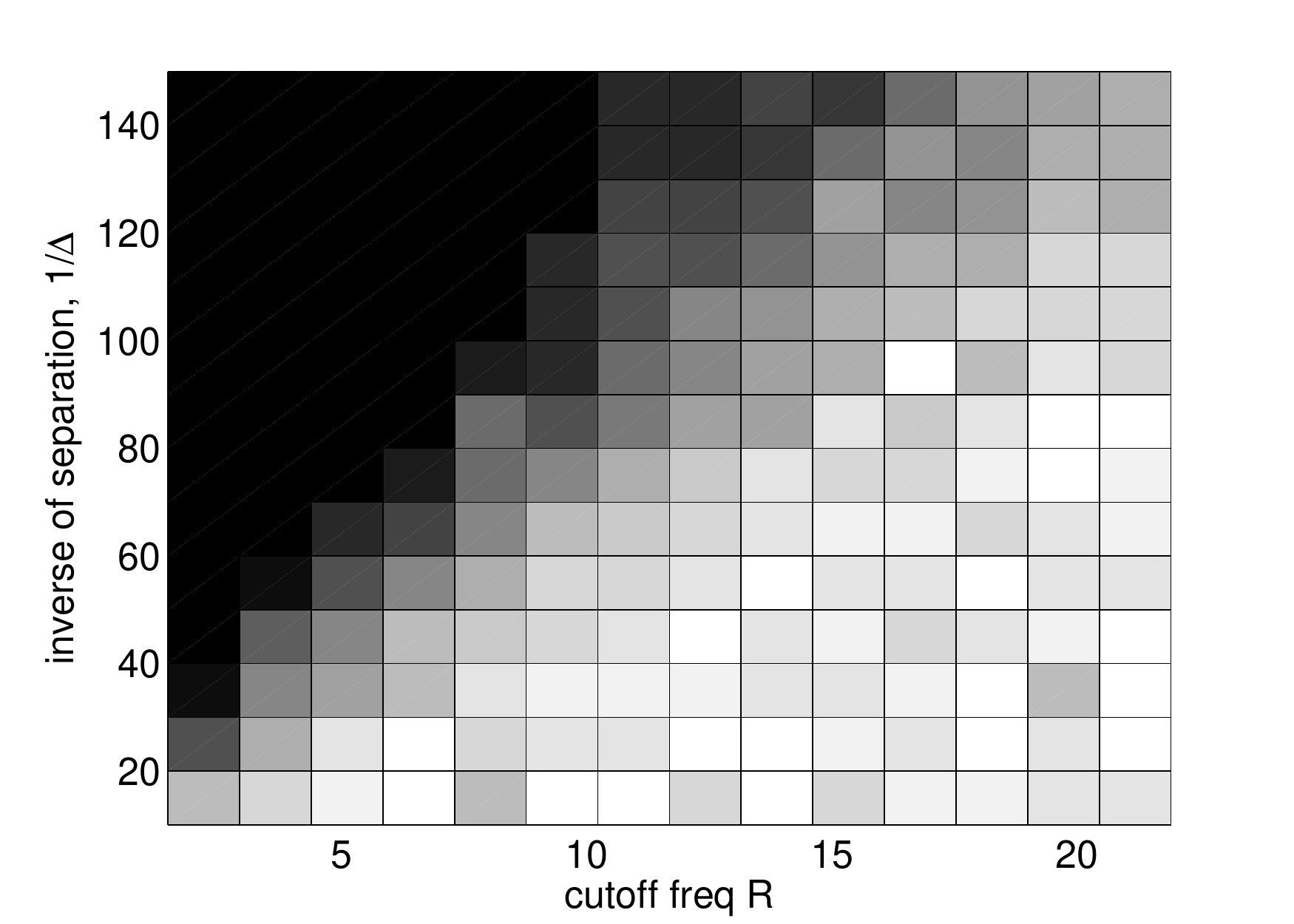}
  }
  \caption{Fix dimension $d =4$, number of point sources $k=8$,  number of
    measurements $m = k^2$, and the measurement noise level $\epsilon_z = 0.02$.
    We vary the minimal separation such that $\Delta$ ranges from 0.005 to 0.1, and we
    vary the cutoff frequency $R$ from 0 to 25.
    For each pair of $({1\over \Delta}, R)$ we
    randomly generate $k$ point sources and run the proposed algorithm to recover the
    point sources.  The recovery is considered successful
    if the error $\sum_{j\in[k]} \sqrt{\|\wh \mu^{(j)} - \mu^{(j)}\|_2^2} \le 0.1$.
    This process is repeated 50 times and the rate of success was recorded.  }
  \label{fig:phase-transition1}
\end{figure}

In a similar setup, we examine the success rate while varying the minimal separation
$\Delta$ and  the number of measurement $m$.

In Figure~\ref{fig:phase-transition2}, we observe that there is a threshold of $m$ below
which the number of measurements is too small to achieve stable recovery; when $m$ is
above the threshold, the success rate increases with the number of measurements as the
algorithm becomes more stable.\    %
However, note that given the appropriately chosen cutoff frequency $R$, the number of measurements
required does not depend on the minimal separation, and thus the computation complexity does not
depend on the minimal separation neither.

\begin{figure}[h!]
  \centering{
    \includegraphics[width = 0.6\textwidth]{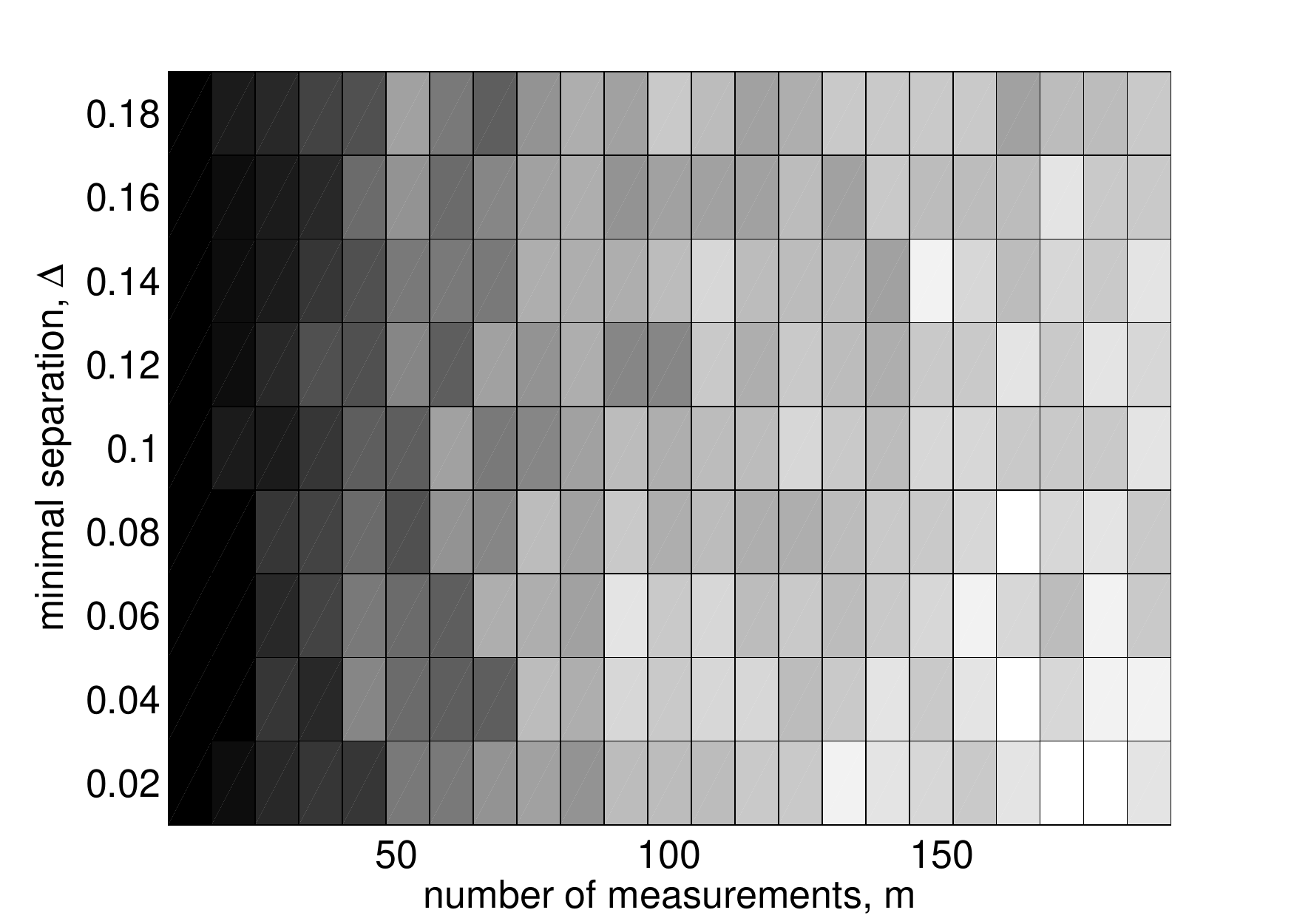}
  }
  \caption{Fix dimension $d =4$, number of point sources $k=8$, and the measurement noise
    level $\epsilon_z = 0.03$.  %
    We vary the minimal separation such that $\Delta$ ranges from 0.01 to 0.2, and we use
    the corresponding cutoff frequency $R= {0.26\over \Delta}$ . We also vary the number
    of measurements $m$ from 4 to 64.
    For each pair of $(\Delta, m)$ we randomly generate $k$ point sources and run the
    proposed algorithm to recover the point sources. The recovery is considered successful
    if the error $\sum_{j\in[k]} \sqrt{\|\wh \mu^{(j)} - \mu^{(j)}\|_2^2} \le 0.1$.
    This process is repeated 50 times and the rate of success was recorded. }
  \label{fig:phase-transition2}
\end{figure}

\subsection{ Connection with learning GMMs}
One reason  we are interested in the scaling of the algorithm with respect to the dimension
$d$ is that it naturally leads to an algorithm for learning Gaussian mixture models
(GMMs).

We first state the problem: given a number of $N$ i.i.d. samples coming from a random one
out of $k$ Gaussian distributions in $d$ dimensional space, the learning problem asks to estimate
the means and the covariance matrices of these Gaussian components, as well as the mixing
weights.
We denote the parameters by $\{(w_j,\mu^{(j)}, \Sigma^{(j)})\}_{i\in[k]}$ where the mean vectors
$\mu^{(j)}\in[-1,+1]^d$, the covariance matrices $\Sigma^{(j)}\in\R^{d\times d}$ and the mixing
weights $w_j\in\R_+$.
Learning mixture of Gaussians is a fundamental problem in statistics and machine learning, whose
study dates back to Pearson\cite{pearson1894contributions} in the 1900s, and later arise in numerous
areas of applications.

In this brief discussion, we only consider the case where the components are spherical
Gaussians with common covariance matrices, namely $\Sigma^{(j)} = \sigma^2 I_{d\times d}$
for all $j$.
Moreover,  we define the separation $\Delta_G$ by:
\begin{align*}
  \Delta_G = {\min_{j\neq j'}\|\mu^{(j)} - \mu^{(j')}\|_2 \over \sigma},
\end{align*}
and we will focus on the well-separated case where $\Delta_G$ is sufficiently large.
This class of well-separated GMMs is often used in data clustering.

By the law of large numbers, for large $d$, the probability mass of a $d$-dimensional Gaussian
distribution tightly concentrates within a thin shell with a $\sqrt{d} \sigma$ distance from the
mean vector. This concentration of distance leads to a line of works of provably learning GMMs in
the well-separated case, started by the seminal work of Dasgupta\cite{dasgupta1999learning}
(spherical and identical $\Sigma$, $\Delta_G \ge \Omega(d^{1/2})$, complexity $poly(d,k)$) and
followed by works of Dasgupta \& Schulman \cite{dasgupta2000two} (spherical and identical $\Sigma$,
$d\gg \log(k)$, $\Delta_G \ge \Omega(d^{1/4})$, complexity $poly(d,k)$), Arora \& Kannan
\cite{sanjeev2001learning} (general and identical $\Sigma$, $\Delta_G \ge \Omega(d^{1/4})$
complexity $O(k^d)$).
% , relaxing the spherical assumption and the amount of separation required.

Instead of relying on the concentration of distance and use distance based clustering to learn the
GMM, we observe that in the well-separated case the characteristic function of the GMM has nice
properties, and one can exploit the concentration of the characteristic function to learn the
parameters.
Note that we do not impose any other assumption on the dimensions $k$ and $d$.

Next, we sketch the basic idea of applying the proposed super-resolution algorithm to learn
well-separated GMMs, guaranteeing that $N$ the required number of samples from the GMM, as well as
the computation complexity both are in the order of $poly(d,k)$.
Since $\sigma$ is a bounded scalar parameter, we can simply apply grid-search to find the best
match.
In the following we assume that the $\sigma$ is given and focus on learning the mean vectors and the
mixing weights.

Evaluate the characteristic function of a $d$ dimensional Gaussian mixture $X$, with identical and
spherical covariance matrix $\Sigma = \sigma^2 I_{d\times d}$, at $s\in\R^{d}$:
\begin{align*}
  \phi_X(s) = \mbb E[e^{i<x,s>}] = \sum_{j\in[k]}w_j e^{-{1\over 2} \sigma^2 \|s\|_2^2 + i
    <\mu^{(j)}, s>}.
\end{align*}
Also we let $\wh \phi_X(s)$ denote the empirical characteristic function evaluated at $s$ based on
$N$ i.i.d. samples $\{x_1,\dots x_N\}$ drawn from this GMM:
\begin{align*}
  \wh \phi_X(s) ={1\over N} \sum_{l\in[N]}e^{i<x_l,s>}.
\end{align*}
Note that $|e^{i<x_l,s>}| = 1$ for all samples, thus we can apply Bernstein concentration inequality
to the characteristic function and argue that $|\wh \phi_X(s) - \phi_X(s)|\le O({1\over \sqrt{N}})$
for all $s$.

In order to apply the proposed super-resolution algorithm, define
\begin{align*}
  & f(s) = e^{{1\over 2} \sigma^2 \pi^2 \|s\|_2^2} \phi_X(\pi s) = \sum_{j\in[k]} w_j e^{i\pi
    <\mu^{(j)}, s>}, \quad \tx{and} \quad \wt f(s) = e^{{1\over 2} \pi^2 \sigma^2 \|s\|_2^2}\wh
  \phi_X(s).
\end{align*}
In the context of learning GMM, taking measurements of $\wt f(s)$ corresponding to evaluating the
empirical characteristic function at different $s$, for $\|s\|_\infty\le R$, where $R$ is the cutoff
frequency.
Note that this implies $\|s\|_2^2\le dR^2$.
Therefore, we have that with high probability the noise level $\epsilon_z$ can be bounded by
\begin{align*}
  \epsilon_z = \max_{\|s\|_\infty\le R} |f(s)-\wt f(s)| = O\lt( {e^{\sigma^2  d R^2} \over \sqrt{N}}\rt).
\end{align*}
In order to achieve stable recovery of the mean vector $\mu^{(j)}$'s using the proposed algorithm,
on one hand, we need the cutoff frequency $R = \Omega({1/ {\sigma \Delta_G}})$; on the other hand,
we need the noise level $\epsilon_z = o(1)$.  It suffices to require $\sigma^2 d
R^2 = o(1)$, namely having large enough separation $\Delta_G\ge \Omega(d^{1/2})$.
In summary, when the separation condition is satisfied, to achieve target accuracy in estimating the
parameters, we need the noise level $\epsilon_z$ to be upper bounded by some inverse polynomial in
the dimensions, and this is equivalent to requiring the number of samples from the GMM to be lower
bounded by $poly(k,d)$.

Although this algorithm does not outperform the scaling result in
Dasgupta\cite{dasgupta1999learning}, it still sheds light on a different approach of learning GMMs.
We leave it as future work to apply super-resolution algorithms to learn more general cases of GMMs
or even learning mixtures of log-concave densities.

% \subsection{Connection with  Ingham type inequalities}
% Important theorem in physics and control theory (observable system)

% Appear in integral form. There is discrete type inequalities, also in grid. This has been
% discovered by super resolution people.

% We show that random samples also work.

\subsection{Open problems}
In a recent work, Chen \& Chi \cite{chen2014robust} showed that via structured matrix completion,
the sample complexity for stable recovery can be reduced to $O(k\log^4 d)$. However, the computation
complexity is still in the order of $O(k^d)$ as the Hankel matrix is of dimension $O(k^d)$ and a
semidefinite program is used to complete the matrix.
It remains an open problem to reduce the sample complexity of our algorithm from $O(k^2)$ to the
information theoretical bound $O(k)$, while retaining the polynomial scaling of the computation
complexity.

Recently, Schiebinger et al \cite{schiebinger2015superresolution} studied the problem of learning a
mixture of shifted and re-scaled point spread functions $f(s) = \sum_{j} w_j\varphi(s, \mu^{(j)})$.
This model has the Gaussian mixture as a special case, with the point spread function being Gaussian
point spread $\varphi(s, \mu^{(j)}) = e^{-(s-\mu^{(j)})^\top \Sigma_j^{-1} (s-\mu^{(j)})}$.
We have discussed the connection between super-resolution and learning GMM.  Another interesting
open problem is to generalize the proposed algorithm to learn mixture of broader classes of
nonlinear functions.

\newpage
\subsection*{Acknowledgments}
The authors thank Rong Ge and Ankur Moitra for very helpful discussions.

\noindent
Sham Kakade acknowledges funding from the Washington Research  Foundation for innovation in
Data-intensive Discovery.

\bibliography{multi-prony}

\begin{thebibliography}{10}

\bibitem{anandkumar2014tensor}
A.~Anandkumar, R.~Ge, D.~Hsu, S.~M. Kakade, and M.~Telgarsky.
\newblock Tensor decompositions for learning latent variable models.
\newblock {\em The Journal of Machine Learning Research}, 15(1):2773--2832,
  2014.

\bibitem{anandkumar2012method}
A.~Anandkumar, D.~Hsu, and S.~M. Kakade.
\newblock A method of moments for mixture models and hidden markov models.
\newblock {\em arXiv preprint arXiv:1203.0683}, 2012.

\bibitem{candes2013super}
E.~J. Cand{\`e}s and C.~Fernandez-Granda.
\newblock Super-resolution from noisy data.
\newblock {\em Journal of Fourier Analysis and Applications}, 19(6):1229--1254,
  2013.

\bibitem{candes2014towards}
E.~J. Cand{\`e}s and C.~Fernandez-Granda.
\newblock Towards a mathematical theory of super-resolution.
\newblock {\em Communications on Pure and Applied Mathematics}, 67(6):906--956,
  2014.

\bibitem{chen2014robust}
Y.~Chen and Y.~Chi.
\newblock Robust spectral compressed sensing via structured matrix completion.
\newblock {\em Information Theory, IEEE Transactions on}, 60(10):6576--6601,
  2014.

\bibitem{dasgupta1999learning}
S.~Dasgupta.
\newblock Learning mixtures of gaussians.
\newblock In {\em Foundations of Computer Science, 1999. 40th Annual Symposium
  on}, pages 634--644. IEEE, 1999.

\bibitem{dasgupta2003elementary}
S.~Dasgupta and A.~Gupta.
\newblock An elementary proof of a theorem of johnson and lindenstrauss.
\newblock {\em Random structures and algorithms}, 22(1):60--65, 2003.

\bibitem{dasgupta2000two}
S.~Dasgupta and L.~J. Schulman.
\newblock A two-round variant of em for gaussian mixtures.
\newblock In {\em Proceedings of the Sixteenth conference on Uncertainty in
  artificial intelligence}, pages 152--159. Morgan Kaufmann Publishers Inc.,
  2000.

\bibitem{donoho1992superresolution}
D.~L. Donoho.
\newblock Superresolution via sparsity constraints.
\newblock {\em SIAM Journal on Mathematical Analysis}, 23(5):1309--1331, 1992.

\bibitem{fernandez2014convex}
C.~Fernandez-Granda.
\newblock {\em A Convex-programming Framework for Super-resolution}.
\newblock PhD thesis, Stanford University, 2014.

\bibitem{harshman1970foundations}
R.~A. Harshman.
\newblock Foundations of the parafac procedure: Models and conditions for an
  "explanatory" multi-modal factor analysis.
\newblock 1970.

\bibitem{komornik2005fourier}
V.~Komornik and P.~Loreti.
\newblock {\em Fourier series in control theory}.
\newblock Springer Science \& Business Media, 2005.

\bibitem{leurgans1993decomposition}
S.~Leurgans, R.~Ross, and R.~Abel.
\newblock A decomposition for three-way arrays.
\newblock {\em SIAM Journal on Matrix Analysis and Applications},
  14(4):1064--1083, 1993.

\bibitem{liao2014music}
W.~Liao and A.~Fannjiang.
\newblock Music for single-snapshot spectral estimation: Stability and
  super-resolution.
\newblock {\em Applied and Computational Harmonic Analysis}, 2014.

\bibitem{moitra2014threshold}
A.~Moitra.
\newblock The threshold for super-resolution via extremal functions.
\newblock {\em arXiv preprint arXiv:1408.1681}, 2014.

\bibitem{mossel2005learning}
E.~Mossel and S.~Roch.
\newblock Learning nonsingular phylogenies and hidden markov models.
\newblock In {\em Proceedings of the thirty-seventh annual ACM symposium on
  Theory of computing}, pages 366--375. ACM, 2005.

\bibitem{nandi2013noise}
S.~Nandi, D.~Kundu, and R.~K. Srivastava.
\newblock Noise space decomposition method for two-dimensional sinusoidal
  model.
\newblock {\em Computational Statistics \& Data Analysis}, 58:147--161, 2013.

\bibitem{pearson1894contributions}
K.~Pearson.
\newblock Contributions to the mathematical theory of evolution.
\newblock {\em Philosophical Transactions of the Royal Society of London. A},
  pages 71--110, 1894.

\bibitem{potts2013parameter}
D.~Potts and M.~Tasche.
\newblock Parameter estimation for nonincreasing exponential sums by prony-like
  methods.
\newblock {\em Linear Algebra and its Applications}, 439(4):1024--1039, 2013.

\bibitem{russell1978controllability}
D.~L. Russell.
\newblock Controllability and stabilizability theory for linear partial
  differential equations: recent progress and open questions.
\newblock {\em Siam Review}, 20(4):639--739, 1978.

\bibitem{sanjeev2001learning}
A.~Sanjeev and R.~Kannan.
\newblock Learning mixtures of arbitrary gaussians.
\newblock In {\em Proceedings of the thirty-third annual ACM symposium on
  Theory of computing}, pages 247--257. ACM, 2001.

\bibitem{schiebinger2015superresolution}
G.~Schiebinger, E.~Robeva, and B.~Recht.
\newblock Superresolution without separation.
\newblock {\em arXiv preprint arXiv:1506.03144}, 2015.

\bibitem{tang2013compressed}
G.~Tang, B.~N. Bhaskar, P.~Shah, and B.~Recht.
\newblock Compressed sensing off the grid.
\newblock {\em Information Theory, IEEE Transactions on}, 59(11):7465--7490,
  2013.

\bibitem{vempala2014max}
S.~S. Vempala and Y.~F. Xiao.
\newblock Max vs min: Independent component analysis with nearly linear sample
  complexity.
\newblock {\em arXiv preprint arXiv:1412.2954}, 2014.

\end{thebibliography}
\bibliographystyle{abbrv}

%\newpage

\section*{Auxiliary lemmas}

\begin{lemma}[Matrix Hoeffding]
  \label{lem:matrix-hoeffding}
  Consider a set $\{X^{(1)},\dots, X^{(m)}\}$ of independent, random, Hermitian matrices of dimension
  $k\times k$, with identical distribution $X$. Assume that $\mbb E[X]$ is finite, and $ X^2 \preceq
  \sigma^2 I$ for some positive constant $\sigma$ almost surely, then, for all $\epsilon \ge 0$,
  \begin{align*}
    Pr\lt( \lt\| {1\over m} \sum_{i=1}^m X^{(i)} - \mbb E[X]\rt \|_2 \ge \epsilon\rt) \le k
    e^{-{m^2\epsilon^2\over 8\sigma^2}}.
  \end{align*}

\end{lemma}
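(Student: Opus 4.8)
The plan is to prove this through the matrix Laplace transform (matrix Chernoff) method of Ahlswede--Winter and Tropp, treating it as a symmetric matrix Hoeffding bound. First I would reduce the two-sided operator-norm deviation to a one-sided eigenvalue tail: writing $Y^{(i)} = X^{(i)} - \mbb E[X]$, the matrices $Y^{(i)}$ are i.i.d., centered, and Hermitian, and since the operator norm of a Hermitian matrix equals $\max\{\lambda_{\max}(S),\lambda_{\max}(-S)\}$ for $S \defeq \tfrac1m\sum_{i=1}^m Y^{(i)}$, it suffices to bound $\mbb P(\lambda_{\max}(S)\ge\epsilon)$ and union bound over the two signs, each tail contributing the dimensional factor $k$. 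I would also first record the almost-sure norm bound hidden in the hypothesis: for Hermitian $X$, the condition $X^2\preceq\sigma^2 I$ means every eigenvalue of $X$ lies in $[-\sigma,\sigma]$, so $\|X\|_2\le\sigma$ a.s.; consequently $\|\mbb E[X]\|_2\le\sigma$ and each centered summand obeys $(Y^{(i)})^2\preceq(2\sigma)^2 I$.

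The main computation is standard. For any $\theta>0$, the matrix Markov inequality gives $\mbb P(\lambda_{\max}(S)\ge\epsilon)\le e^{-\theta\epsilon}\,\mbb E\operatorname{tr}\exp\lt(\tfrac{\theta}{m}\sum_i Y^{(i)}\rt)$. Lieb's concavity theorem, equivalently the subadditivity of the matrix cumulant generating function under the trace, decouples the independent terms into $\operatorname{tr}\exp\lt(\sum_i\log\mbb E\,e^{(\theta/m)Y^{(i)}}\rt)$. I would then invoke the matrix Hoeffding moment bound for a centered Hermitian matrix whose square is dominated by $(2\sigma)^2 I$, namely $\log\mbb E\,e^{sY^{(i)}}\preceq\tfrac{s^2}{2}(2\sigma)^2 I$, which itself follows from the functional-calculus inequality $e^{sy}\le\cosh(2s\sigma)+\tfrac{y}{2\sigma}\sinh(2s\sigma)$ on $[-2\sigma,2\sigma]$, taking expectations (the linear term vanishes by centering) and using $\log\cosh(u)\le u^2/2$. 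Putting $s=\theta/m$ and summing the $m$ terms gives $\sum_i\log\mbb E\,e^{(\theta/m)Y^{(i)}}\preceq\tfrac{2\theta^2\sigma^2}{m}I$, hence $\operatorname{tr}\exp(\cdots)\le k\,e^{2\theta^2\sigma^2/m}$; optimizing $e^{-\theta\epsilon+2\theta^2\sigma^2/m}$ at $\theta=m\epsilon/(4\sigma^2)$ produces the eigenvalue tail $k\,e^{-m\epsilon^2/(8\sigma^2)}$, and the two-sided union bound gives the operator-norm version (the extra factor $2$ being absorbable).

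The step I expect to be the crux is the bookkeeping of the $\tfrac1m$ normalization against the variance proxy inside the exponent, because this is exactly what fixes the power of $m$. In the accounting above, $m$ enters in three places --- the rescaling $\theta/m$ in each moment generating function, the sum over the $m$ independent summands, and the final optimization over $\theta$ --- and these combine to leave an exponent that is \emph{linear} in $m$, of the form $m\epsilon^2/(8\sigma^2)$, with the constant $1/8$ coming precisely from the centered range $2\sigma$ and the $s^2/2$ factor in the moment bound. Matching the constant $1/8$ of the stated inequality is therefore automatic, but the stated $m^2$ in the exponent does not emerge from this route: under the sole hypothesis $X^2\preceq\sigma^2 I$ the variance proxy cannot be driven below order $\sigma^2/m$, so I would expect the correct exponent to be linear in $m$ (consistent with the central-limit scaling of the sample mean) and would flag the quadratic dependence as the point to reconcile before finalizing the downstream sample-size requirement.
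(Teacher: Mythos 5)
Your proof is correct, and it is in fact more than the paper provides: Lemma~\ref{lem:matrix-hoeffding} appears in the paper's auxiliary-lemma section with no proof at all, as an import of the standard matrix Hoeffding bound. Your route --- reducing the two-sided operator-norm event to one-sided $\lambda_{\max}$ tails for the centered summands $Y^{(i)}=X^{(i)}-\mathbb{E}[X]$, applying the matrix Markov/Laplace-transform bound, decoupling via Lieb's concavity (subadditivity of the trace mgf), using the scalar-convexity bound $\log\mathbb{E}\,e^{sY}\preceq 2s^2\sigma^2 I$ valid for $\|Y\|\le 2\sigma$, and optimizing at $\theta=m\epsilon/(4\sigma^2)$ --- is the standard Ahlswede--Winter/Tropp derivation, and your constant accounting (the one-sided tail $k\,e^{-m\epsilon^2/(8\sigma^2)}$) is right.

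More importantly, the flag in your last paragraph identifies a genuine error in the paper's statement: under the sole hypothesis $X^2\preceq\sigma^2 I$ the exponent must be \emph{linear} in $m$, and the stated $k\,e^{-m^2\epsilon^2/(8\sigma^2)}$ is false. Already the scalar case kills it: for $k=1$ and $X=\pm\sigma$ Rademacher, $\Pr\bigl(\bigl|\frac{1}{m}\sum_i X^{(i)}\bigr|\ge\epsilon\bigr)$ is of order $e^{-\Theta(m\epsilon^2/\sigma^2)}$ in the regime $\epsilon\sim\sigma/\sqrt{m}$, which vastly exceeds $e^{-m^2\epsilon^2/(8\sigma^2)}$. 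The likely source of the slip is instantiating Tropp's bound $k\,e^{-t^2/(8\tilde\sigma^2)}$ at $t=m\epsilon$ while taking $\tilde\sigma^2=\sigma^2$ instead of $\tilde\sigma^2=\|\sum_i A_i^2\|=m\sigma^2$. This is not cosmetic downstream: Lemma~\ref{lem:main-tech} calibrates $m\ge\frac{k}{\epsilon_x}\sqrt{8\log\frac{k}{\delta_s}}$ precisely so that $m^2\epsilon_x^2/(8k^2)=\log(k/\delta_s)$ (with $\sigma=k$, since $X_s^2\preceq k^2 I$); with the correct linear-in-$m$ exponent the requirement becomes $m\gtrsim\frac{k^2}{\epsilon_x^2}\log\frac{k}{\delta_s}$, so the measurement count $(m')^2$ in Theorem~\ref{thm:main-thm} and Table~\ref{tab:a} grows to roughly $(k^2\log k+d)^2$ rather than the advertised $(k\log(k)+d)^2$. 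The qualitative claims (polynomial in $k$ and $d$, no dependence on $\Delta$) survive, but the quantitative bounds should be corrected as you suggest before being used.
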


\begin{lemma}[Gershgorin's Disk Theorem]
  \label{lem:geshgorin-disk}
 The eigenvalues of a matrix $Y\in\C^{k\times k}$ are all contained in the following union of disks in the complex
 plane: $\cup_{j=1}^{k} \mc D(Y_{j,j}, R_j)$, where disk $\mc D(a,b) = \{x\in\C^{k}: \|x-a\|\le b\}$
 and $R_j= \sum_{j'\neq j}|Y_{j,j'}|$.
\end{lemma}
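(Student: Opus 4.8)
The plan is to use the standard dominant-coordinate argument. Let $\lambda \in \C$ be any eigenvalue of $Y$, and let $x \in \C^{k}$ be a corresponding eigenvector, so $x \neq 0$ and $Yx = \lambda x$. First I would single out the coordinate of largest modulus: choose an index $i \in [k]$ achieving $|x_i| = \max_{j} |x_j|$. Because $x \neq 0$, this maximum is strictly positive, so $|x_i| > 0$, which is the only nondegeneracy we will need.

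Next I would extract the $i$-th row of the eigenvalue relation. From $(Yx)_i = \sum_{j} Y_{i,j} x_j = \lambda x_i$, isolating the diagonal term gives $(\lambda - Y_{i,i}) x_i = \sum_{j \neq i} Y_{i,j} x_j$. Taking moduli, then applying the triangle inequality and the defining bound $|x_j| \le |x_i|$ for every $j$, yields $|\lambda - Y_{i,i}|\,|x_i| \le \sum_{j \neq i} |Y_{i,j}|\,|x_j| \le |x_i| \sum_{j \neq i} |Y_{i,j}| = R_i\,|x_i|$. Dividing through by $|x_i| > 0$ gives $|\lambda - Y_{i,i}| \le R_i$, i.e. $\lambda \in \mc D(Y_{i,i}, R_i) \subseteq \cup_{j=1}^{k} \mc D(Y_{j,j}, R_j)$. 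Since $\lambda$ was an arbitrary eigenvalue, every eigenvalue lies in the stated union, which is the claim.

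The only point worth flagging—hardly an obstacle—is that the dominant index $i$ depends on $\lambda$ (through the chosen eigenvector), so distinct eigenvalues may certify membership in distinct disks. This is precisely why the conclusion asserts containment in the \emph{union} of disks rather than in any single disk, and it is also why no simultaneous control of the whole spectrum, and hence no continuity or homotopy refinement, is required for this statement.
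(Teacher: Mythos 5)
Your proof is correct and complete: it is the standard dominant-coordinate argument for Gershgorin's Disk Theorem, and the paper itself states this as a classical auxiliary lemma without providing any proof, so there is nothing to diverge from.
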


\begin{lemma}[Vector Random Projection]
  \label{lem:anti-concentration}
  Let $a\in\R^{m}$  be a random vector distributed uniformly over $\mc P^{m}_{1,2 }$, and
  fix a vector $v\in\C^{m}$. For $\delta\in(0,1)$, we have:
  \begin{align*}
    Pr\lt( |<a, v>| \le {\|v\|_2  \over \sqrt{em}}\delta\rt) \le \delta
  \end{align*}
\end{lemma}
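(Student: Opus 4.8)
The plan is to prove this anti-concentration estimate by reducing the projection of the fixed complex vector onto the random direction to the one-dimensional marginal of a single coordinate of a uniform point on the sphere, and then bounding the density of that marginal.

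First I would strip away the complex structure. Since $a$ is real, multiplying $v$ by any unit-modulus phase only rotates the scalar $\langle a,v\rangle$ in the complex plane, so $|\langle a,v\rangle| = |\langle a, e^{i\theta}v\rangle| \ge |\langle a, \mathrm{Re}(e^{i\theta}v)\rangle|$ for every real $\theta$. Because the identity $\|\mathrm{Re}(e^{i\theta}v)\|_2^2 + \|\mathrm{Im}(e^{i\theta}v)\|_2^2 = \|v\|_2^2$ holds for all $\theta$, averaging over $\theta$ forces $\max_\theta \|\mathrm{Re}(e^{i\theta}v)\|_2^2 \ge \tfrac12\|v\|_2^2$. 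Fixing an optimal $\theta$ and setting $u = \mathrm{Re}(e^{i\theta}v)\in\R^m$, I reduce the problem to a fixed \emph{real} vector satisfying $\|u\|_2 \ge \|v\|_2/\sqrt2$ and $|\langle a,v\rangle|\ge |\langle a,u\rangle|$.

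Next, by rotational invariance of the uniform measure on the sphere, $\langle a,u\rangle$ is distributed as $\|u\|_2\,a_1$, where $a_1$ is the first coordinate of a uniform point on $S^{m-1}$. It then remains to control $\Pr(|a_1|\le s)$. The density of $a_1$ is proportional to $(1-x^2)^{(m-3)/2}$ on $[-1,1]$, which for $m\ge 3$ is maximized at $x=0$; bounding the normalizing constant via the log-convexity (Gamma-ratio) inequality $\Gamma(m/2)/\Gamma((m-1)/2)\le \sqrt{(m-1)/2}\le\sqrt{m/2}$ shows the density is bounded everywhere by $\sqrt{m/(2\pi)}$, whence $\Pr(|a_1|\le s)\le s\sqrt{2m/\pi}$ (the degenerate small cases $m\in\{1,2\}$ can be checked directly, e.g. $a_1=\pm1$ or $a_1=\cos\phi$).

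Finally I combine the pieces: $\Pr\lt(|\langle a,v\rangle|\le t\rt)\le \Pr\lt(|a_1|\le t/\|u\|_2\rt)\le \Pr\lt(|a_1|\le t\sqrt2/\|v\|_2\rt)\le (t\sqrt2/\|v\|_2)\sqrt{2m/\pi}$, and substituting $t = \|v\|_2\delta/\sqrt{em}$ turns the right-hand side into $2\delta/\sqrt{e\pi}<\delta$, which is the claim with comfortable slack in the constant. The main obstacle is the uniform-in-$m$ bound on the sphere-marginal density: one must control its normalizing constant either through the Beta/Gamma computation above, or equivalently through the Gaussian representation $a = g/\|g\|$ with $g\sim\mc N(0,I_m)$ together with a chi-square tail bound on $\|g\|$. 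The complex-to-real reduction is the only other delicate point, and the factor $\sqrt2$ it costs is precisely what the $\sqrt{e}$ in the stated threshold absorbs.
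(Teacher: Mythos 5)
Your proof is correct and follows essentially the same route as the paper, which simply invokes Lemma 2.2 of Dasgupta \& Gupta (reduction by rotational invariance to the first coordinate of a uniform point on the sphere, plus splitting the complex vector into real and imaginary parts exactly as you do); the only difference is that you bound the small-ball probability of that marginal via an explicit density estimate $(1-x^2)^{(m-3)/2}$ with a Gamma-ratio bound, whereas the cited lemma uses a Chernoff bound on the Beta-distributed squared coordinate, and both yield the stated $\|v\|_2\delta/\sqrt{em}$ threshold with room to spare. The constants in your final computation, $2\delta/\sqrt{e\pi}<\delta$, check out, and your separate treatment of $m\in\{1,2\}$ closes the only edge case.
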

\begin{proof}
  This follows the argument of Lemma 2.2 from Dasgupta \& Gupta \cite{dasgupta2003elementary}.  Extension to complex
  number is straightforward as we can bound the real part and the imaginary part separately.
\end{proof}

%\bibliographystyle{plainnat}
%\bibliography{multiprony}

% \appendix

\end{document}